\documentclass[preprint,12pt]{elsarticle}




\usepackage{amsmath}
\usepackage{amsfonts}
\usepackage{amsthm}
\usepackage{amssymb}

\usepackage{lineno}

\usepackage{booktabs}
\usepackage{algorithm}
\usepackage{algorithmicx}
\usepackage{algpseudocode}
\usepackage{fancyhdr}






\newcommand{\mb}[1]{\boldsymbol{\rm #1}} 
\newcommand{\mat}[2]{\left[\begin{array}{#1}#2\end{array}\right]}

\newcommand{\Ncal}{{\cal N}}

\newcommand{\HIW}{\mbox{HIW}}

\newcommand{\Multi}{\mbox{Multi}}

\newcommand{\Dir}{\mbox{Dir}}
\newcommand{\Ber}{\mbox{Ber}}
\newcommand{\Bin}{\mbox{Bin}}

\newcommand{\BP}{\mbox{BP}}
\newcommand{\BeP}{\mbox{BeP}}
\newcommand{\BetaD}{\mbox{Beta}}
\newcommand{\GammaD}{\mbox{Gamma}}

\newcommand{\T}{{\rm T}}


\newtheorem{lemma}{Lemma}[section]

\journal{Artificial Intelligence Journal}

\begin{document}

\begin{frontmatter}



\title{Modeling the Complex Dynamics and Changing Correlations of Epileptic Events}


\author[PennBE]{Drausin F. Wulsin}
\author[UWStat]{Emily B. Fox}
\author[PennBE,PennNeur]{Brian Litt}

\address[PennBE]{Department of Bioengineering, University of Pennsylvania, Philadelphia, PA}
\address[PennNeur]{Department of Neurology, University of Pennsylvania, Philadelphia, PA}
\address[UWStat]{Department of Statistics, University of Washington, Seattle, WA}

\begin{abstract}
Patients with epilepsy can manifest short, sub-clinical epileptic ``bursts'' in addition to full-blown clinical seizures. We believe the relationship between these two classes of events---something not previously studied quantitatively---could yield important insights into the nature and intrinsic dynamics of seizures. A goal of our work is to parse these complex epileptic events into distinct dynamic regimes.  A challenge posed by the intracranial EEG (iEEG) data we study is the fact that the number and placement of electrodes can vary between patients.  We develop a Bayesian nonparametric Markov switching process that allows for (i) shared dynamic regimes between a variable number of channels, (ii) asynchronous regime-switching, and (iii) an unknown dictionary of dynamic regimes.  We encode a sparse and changing set of dependencies between the channels using a Markov-switching Gaussian graphical model for the innovations process driving the channel dynamics and demonstrate the importance of this model in parsing and out-of-sample predictions of iEEG data.  We show that our model produces intuitive state assignments that can help automate clinical analysis of seizures and enable the comparison of sub-clinical bursts and full clinical seizures.
\end{abstract}

\begin{keyword}
	Bayesian nonparametric \sep EEG \sep factorial hidden Markov model \sep graphical model \sep time series


\end{keyword}

\end{frontmatter}


\section{Introduction}

Despite over three decades of research, we still have very little idea of what defines a seizure. This ignorance stems both from the complexity of epilepsy as a disease and a paucity of quantitative tools that are flexible enough to describe epileptic events but restrictive enough to distill intelligible information from them. Much of the recent machine learning work in electroencephalogram (EEG) analysis has focused on seizure prediction, \citep[cf.,][]{D'Alessandro2005,Mirowski2009}, an important area of study but one that generally has not focused on parsing the EEG directly, as a human EEG reader would. Such parsings are central for diagnosis and relating various types of abnormal activity. Recent evidence shows that the range of epileptic events extends beyond clinical seizures to include shorter, sub-clinical ``bursts'' lasting fewer than 10 seconds \cite{Litt2001}. What is the relationship between these shorter bursts and the longer seizures?  In this work, we demonstrate that machine learning techniques can have substantial impact in this domain by unpacking how seizures begin, progress, and end.  

In particular, we build a Bayesian nonparametric time series model to analyze intracranial EEG (iEEG) data. 
We take a modeling approach similar to a physician's in analyzing EEG events: look directly at the evolution of the raw EEG voltage traces. EEG signals exhibit nonstationary behavior during a variety of neurological events, and time-varying autoregressive (AR) processes have been proposed to model single channel data \cite{Krystal1999}.  Here we aim to parse the recordings into interpretable regions of activity and thus propose to use autoregressive hidden Markov models (AR-HMMs) to define \emph{locally} stationary processes.  In the presence of multiple channels of simultaneous recordings, as is almost always the case in EEG, we wish to share AR states between the channels while allowing for asynchronous switches. The recent beta process (BP) AR-HMM of~\citet{Fox2009} provides a flexible model of such dynamics: a shared library of infinitely many possible AR states is defined and each time series uses a finite subset of the states.  The process encourages sharing of AR states, while allowing for time-series-specific variability.  

Conditioned on the selected AR dynamics, the BP-AR-HMM assumes independence between time series. In the case of iEEG, this assumption is almost assuredly false. Figure~\ref{fig:electSpatSetup} shows an example of a 4x8 intracranial electrode grid and the residual EEG traces of 16 channels \emph{after} subtracting the predicted value in each channel using a conventional BP-AR-HMM. While the error term in some channels remains low throughout the recording, other channels---especially those spatially adjacent in the electrode grid---have very correlated error traces. We propose to capture correlations between channels by modeling a multivariate innovations process that drives independently evolving channel dynamics.  We demonstrate the importance of accounting for this error trace in predicting heldout seizure recordings, making this a crucial modeling step before undertaking large-scale EEG analysis.  
\begin{figure*}[t]
	\begin{center}
		\includegraphics[width=\textwidth]{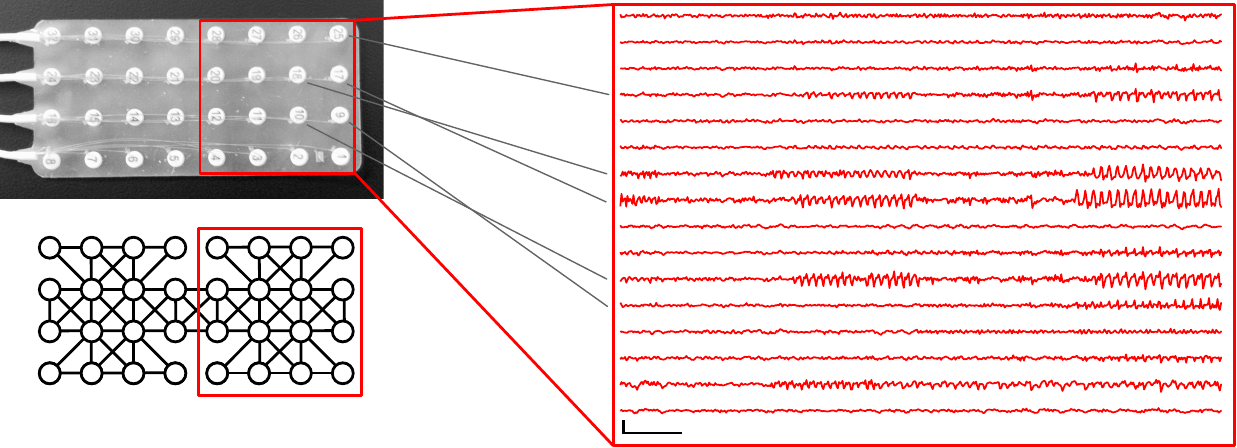} 
	  \end{center}
	\vspace{-0.1in}\caption{An iEEG grid electrode and (\textbf{bottom left}) corresponding graphical model. (\textbf{middle}) Residual EEG values \emph{after} subtracting predictions from a BP-AR-HMM assuming independent channels. All EEG scale bars indicate 1 mV vertically and 1 second horizontally.
	}
	\label{fig:electSpatSetup}
\end{figure*} 

To aid in scaling to large electrode grids, we exploit a sparse dependency structure for the multivariate innovations process. In particular, we assume a graph with known vertex structure that encodes conditional independencies in the multivariate innovations process.  The graph structure is based on the spatial adjacencies of the iEEG channels, with a few exceptions to make the graphical model fully decomposable.  Figure~\ref{fig:electSpatSetup} (bottom left) shows an example of such a graphical model over the channels. Although the relative position of channels in the electrode grid is clear, determining the precise 3D location of each channel is extremely difficult. Furthermore, unlike in scalp EEG or magentoencephalogram (MEG), which have generally consistent channel positions from patient to patient, iEEG channels vary in number and position for each patient.  These issues impede the use of alternative spatial and multivariate time series modeling techniques.   

It is well-known that the correlations between EEG channels usually vary during the beginning, middle, and end of a seizure \cite{Schindler2007,Schiff2005}.  \citet{Prado2006} employ a mixture-of-expert vector autoregressive (VAR) model to describe the different dynamics present in seven channels of scalp EEG.  We take a similar approach by allowing for a Markov evolution to an underlying innovations covariance state.

An alternative modeling approach is to treat the channel recordings as a single multivariate time series, perhaps using a switching VAR process as in~\citet{Prado2006}.  However, such an approach (i) assumes synchronous switches in dynamics between channels, (ii) scales poorly with the number of channels, and (iii) requires an identical number of channels between patients to share dynamics between event recordings. 

Other work has explored nonparametric modeling of multiple time series. The infinite factorial HMM of \citet{VanGael2008} considers an infinite collection of chains each with a binary state space.  The infinite hierarchical HMM \cite{Heller2009} also involves infinitely many chains with finite state spaces, but with constrained transitions between the chains in a top down fashion.  The infinite DBN of \citet{Doshi-Velez2011} considers more general connection structures and arbitrary state spaces.  Alternatively, the graph-coupled HMM of \citet{Dong2012} allows graph-structured dependencies in the underlying states of some $N$ Markov chains. Here, we consider a finite set of chains with infinite state spaces that evolve independently. The factorial structure combines the chain-specific AR dynamic states and the graph-structured innovations to generate the multivariate observations with sparse dependencies.

Expanding upon previous work \cite{Wulsin2013}, we show that our model for correlated time series has better out-of-sample predictions of iEEG data than standard AR- and BP-AR-HMMs and demonstrate the utility of our model in comparing short, sub-clinical epileptic bursts with longer, clinical seizures.  Our inferred parsings of iEEG data concur with key features hand-annotated by clinicians but provide additional insight beyond what can be extracted from a visual read of the data.  The importance of our methodology is multifold: (i) the output is interpretable to a practitioner and (ii) the parsings can be used to relate seizure types both within and between patients even with different electrode setups. Enabling such broad-scale automatic analysis, and identifying dynamics unique to sub-clinical seizures, can lead to new insights in epilepsy treatments.

Although we are motivated by the study of seizures from iEEG data, our work is much more broadly applicable in time series analysis.  For example, perhaps one has a collection of stocks and wants to model shared dynamics between them while capturing changing correlations.  The BP-AR-HMM was applied to the analysis of a collection of motion capture data assuming independence between individuals; our modeling extension could account for coordinated motion with a sparse dependency structure between individuals.  Regardless, we find the impact in the neuroscience domain to be quite significant.

\section{A Structured Bayesian Nonparametric Factorial AR-HMM}

\subsection{Dynamic Model}
Consider an event with $N$ univariate time series of length $T$. This event could be a seizure, where each time series is one of the iEEG voltage-recording channels. For clarity of exposition, we refer to the individual univariate time series as \emph{channels} and the resulting $N$-dimensional multivariate time series (stacking up the channel series) as the \emph{event}. We denote the scalar value for each channel $i$ at each (discrete) time point $t$ as $y^{(i)}_t$ and model it using an $r$-order AR-HMM~\cite{Fox2009}.  That is, each channel is modeled via Markov switches between a set of AR dynamics.  Denoting the latent state at time $t$ by $z_t^{(i)}$, we have:
\begin{align}
\begin{aligned}
	z_t^{(i)} & \sim \mb{\pi}^{(i)}_{z_{t-1}^{(i)}}, \\
	y_t^{(i)} & = \sum_{j=1}^r a_{z_t^{(i)},j} y_{t-j}^{(i)} + \epsilon_t^{(i)} = \mb{a}^\mathrm{T}_{z_t^{(i)}}\mb{\widetilde{y}}_{t}^{(i)} + \epsilon_t^{(i)}.	
\end{aligned}
\label{eq:channelARHMM}
\end{align}
Here, $\mb{a}_{k} = \left(a_{k,1},\dots,a_{k,r}\right)^\T$ are the AR parameters for state $k$ and $\pi_k$ is the transition distribution from state $k$ to any other state.  We also introduce the notation $\widetilde{\mb{y}}_t^{(i)}$ as the vector of $r$ previous observations  $\left(y_{t-1}^{(i)},\dots,y_{t-r}^{(i)}\right)^\T$. 

In contrast to a vector AR (VAR) HMM specification of the event, our modeling of channel dynamics separately as in Eq.~\eqref{eq:channelARHMM} allows for (i) asynchronous switches and (ii) sharing of dynamic parameters between recordings with a potentially different number of channels.  However, a key aspect of our data is the fact that the channels are correlated.  Likewise, these correlations change as the patient progresses through various seizure \emph{event states} (e.g., ``resting'', ``onset'', ``offset'', \dots).  That is, the channels may display one innovation covariance before a seizure (e.g., relatively independent and low-magnitude) but quite a different covariance during a seizure (e.g., correlated, higher magnitude).  To capture this, we jointly model the innovations $\mb{\epsilon}_t = \left(\epsilon_t^{(1)}, \dots, \epsilon_t^{(N)}\right)^{\rm T}$ driving the AR-HMMs of Eq.~\eqref{eq:channelARHMM} as
\begin{align}
\begin{aligned}
	Z_t & \sim \mb{\phi}_{Z_{t-1}}, \\
	\mb{\epsilon}_t & \sim \Ncal(\mb{0}, \Delta_{Z_t}),
	\label{eq:eventHMM}
\end{aligned}
\end{align}
where $Z_t$ denotes a Markov-evolving event state distinct from the individual channel states $\{z_t^{(i)}\}$, $\mb{\phi}_{l}$ the transition distributions, and $\Delta_k$ the event-state-specific channel covariance.  That is each $\Delta_l$ describes a different set of channel relationships. 

For compactness, we sometimes alternately write
\begin{align}
	\mb{y}_t = \mb{A}_{\mb{z}_t}\widetilde{\mb{Y}}_t + \mb{\epsilon}_t(Z_t),
	\label{eq:vec_yt_def}
\end{align}
where $\mb{y}_t$ is the concatenation of $N$ channel observations at time $t$ and $\mb{z}_t$ is the vector of concatenated channel states.  The overall dynamic model is represented graphically in Figure~\ref{fig:ARHMMgraph}.

\begin{figure}[tp]
	\begin{center}
		\includegraphics[width=0.75\textwidth]{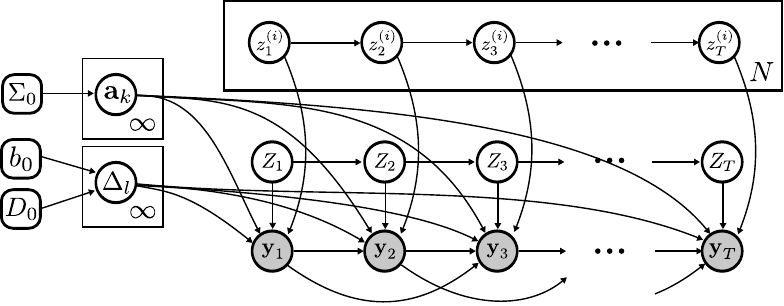}
	\end{center}
	\vspace{-0.1in}\caption{Graphical model of our factorial AR-HMM.  The $N$ channel states $z_t^{(i)}$ evolve according to independent Markov processes (transition distributions omitted for simplicity) and index the AR dynamic parameters $\mb{a}_k$ used in generating observation $y_t^{(i)}$.  The Markov-evolving event state $Z_t$ indexes the graph-structured covariance $\Delta_l$ of the correlated AR innovations resulting in multivariate observations $\mb{y}_t = [y_t^{(1)},\ldots,y_t^{(N)}]^{\rm T}$ sharing the same conditional independencies. }
	\label{fig:ARHMMgraph}
\end{figure}

\paragraph{Scaling to large electrode grids}
To scale our model to a large number of channels, we consider a Gaussian graphical model (GGM) for $\mb{\epsilon}_t$ capturing a sparse dependency structure amongst the channels.  Let $G=(V,E)$ be an undirected graph with $V$ the set of channel \emph{nodes} $i$ and $E$ the set of \emph{edges} with $(i,j) \in E$ if $i$ and $j$ are connected by an edge in the graph.  Then, $\left[\Delta_l^{-1}\right]_{ij} = 0$ for all $(i,j) \not \in E$, implying $\epsilon_t^{(i)}$ is conditionally independent of $\epsilon_t^{(j)}$ given $\epsilon_t^{(k)}$ for all channels $k \neq i,j$.  In our dynamic model of Eq.~\eqref{eq:channelARHMM}, statements of conditional independence of $\mb{\epsilon}_t$ translate directly to statements of the observations $\mb{y}_t$.

In our application, we choose $G$ based on the spatial adjacencies of channels in the electrode grid, as depicted in Figure~\ref{fig:electSpatSetup} (bottom left).  In addition to encoding the spatial proximities of iEEG electrodes, the graphical model also yields a sparse precision matrix $\Delta_l^{-1}$, allowing for more efficient scaling to the large number of channels commonly present in iEEG. These computational efficiencies are made clear in Section~\ref{sec:MCMC}.

\paragraph{Interpretation as a sparse factorial HMM}
Recall that our formulation involves $N+1$ independently evolving Markov chains: $N$ chains for the channel states $z_t^{(i)}$ plus one for the event state sequence $Z_t$.  As indicated by the observation model of Eq.~\eqref{eq:vec_yt_def}, the $N+1$ Markov chains jointly generate our observation vector $\mb{y}_t$ leading to an interpretation of our formulation as a factorial HMM \citep{Ghahramani1997}.  However, here we have a \emph{sparse} dependency structure in how the Markov chains influence a given observation $\mb{y}_t$, as induced by the conditional independencies in $\mb{\epsilon}_t$ encoded in the graph $G$.  That is, $y_t^{(i)}$ only depends on $Z_t$ the set of $z_t^{(j)}$ for which $j$ is a neighbor of $i$ in $G$.

\subsection{Prior Specification}
\label{sec:prior}

\paragraph{Emission parameters}
As in the AR-HMM, we place a multivariate normal prior on the AR coefficients,
\begin{equation}
	\mb{a}_k \sim \Ncal(\mb{m}_0, \Sigma_0),
	\label{eq:a_prior}
\end{equation}
with mean $\mb{m}_0$ and covariance $\Sigma_0$. Throughout this work, we let $\mb{m}_0 = \mb{0}$. 

For the channel covariances $\Delta_l$ with sparse precisions $\Delta_l^{-1}$ determined by the graph $G$, we specify a hyper-inverse Wishart (HIW) prior,
\begin{equation}
	\Delta_l \sim \HIW_G(b_0, D_0),
\end{equation}
where $b_0$ denotes the degrees of freedom and $D_0$ the scale. The HIW prior \cite{Dawid1993} enforces hyper-Markov conditions specified by $G$. 

\paragraph{Feature constrained channel transition distributions}
A natural question is how many AR states are the channels switching between?  Likewise, which are shared between the channels and which are unique?  We expect to see similar dynamics present in the channels (sharing of AR processes), but also some differences.  For example, maybe only some of the channels ever get excited into a certain state.  To capture this structure, we take a Bayesian nonparametric approach building on the beta process (BP) AR-HMM of~\citet{Fox2011c}.  Through the beta process prior \cite{Thibaux2007}, the BP-AR-HMM defines a shared library of infinitely many AR coefficients $\{\mb{a}_k\}$, but encourages each channel to only use a sparse subset of them.  

The BP-AR-HMM specifically defines a featural model.  Let $\mb{f}^{(i)}$ be a binary feature vector associated with channel $i$ with $f_k^{(i)}=1$ indicating that channel $i$ uses the dynamic described by $\mb{a}_k$. Formally, the feature assignments $f_k^{(i)}$ and their corresponding parameters $\mb{a}_k$ are generated by a beta process random measure and the conjugate Bernoulli process (BeP),
\begin{align}
\begin{aligned}
	B & \sim \BP(1,B_0), \\
	X^{(i)} & \sim \BeP(B),
\end{aligned}
\end{align}
with base measure $B_0$ over the parameter space $\Theta = \mathbb{R}^r$ for our $r$-order autoregressive parameters $\mb{a}_k$. As specified in Eq.~\eqref{eq:a_prior}, we take the normalized measure $B_0 / B_0(\Theta)$ to be $\Ncal(\mb{m}_0, \Sigma_0)$. The discrete measures $B$ and $X^{(i)}$ can be represented as
\begin{align}
	B = \sum_{k=1}^{\infty} \omega_k \delta_{\mb{a}_k}, \quad
	X^{(i)}  = \sum_{k=1}^\infty f^{(i)}_k \delta_{\mb{a}_k},
\end{align}
with $f^{(i)}_k \sim \Ber(\omega_k)$. 

The resulting feature vectors $\mb{f}^{(i)}$ constrain the set of available states $z_t^{(i)}$ can take by constraining each transition distributions, $\mb{\pi}^{(i)}_j$, to be 0 when $f_k^{(i)}=0$. Specifically, the BP-AR-HMM defines $\mb{\pi}^{(i)}_j$ by introducing a set of gamma random variables, $\eta_{jk}^{(i)}$, and setting
\begin{align}
	\eta^{(i)}_{jk} & \sim \GammaD(\gamma_{\rm c} + \kappa_{\rm c}\delta(j,k)) \\
	\mb{\pi}^{(i)}_j & = \frac{\mb{\eta}^{(i)}_j \circ \mb{f}^{(i)}}{\sum_{k\mid f_k^{(i)} = 1}\eta^{(i)}_{jk}} \label{eq:pi_eta_f}.
\end{align}
The positive elements of $\mb{\pi}^{(i)}_j$ can also be thought of as a sample from a finite Dirichlet distribution with only $K^{(i)}$ dimensions, where $K^{(i)} = \sum_k f^{(i)}_k$ represents the number of states channel $i$ uses. For convenience, we sometimes denote the set of transition variables $\{\eta_{jk}^{(i)}\}_j$ as $\mb{\eta}^{(i)}$. As in the sticky HDP-HMM of \citet{Fox2011}, the parameter $\kappa_c$ encourages self-transitions (i.e., state $j$ at time $t-1$ to state $j$ at time $t$). 

\paragraph{Unconstrained event transition distributions}
We again take a Bayesian nonparametric approach to define the event state HMM, building on the sticky HDP-HMM \cite{Fox2011}. In particular, the transition distributions $\mb{\phi}_l$ are hierarchically defined as
\begin{align}
\begin{aligned} \label{eq:stickyHDP}
	\mb{\beta} &\sim \mbox{stick}(\alpha), \\
	\mb{\phi}_l &\sim \mbox{DP}(\alpha_{\rm e}\mb{\beta} + \kappa_{\rm e}\mb{e}_l),
\end{aligned}
\end{align}
where $\mbox{stick}(\alpha)$ refers to a stick-breaking measure, also known as $\mbox{GEM}(\alpha)$, with $\mb{\beta}$ generated by
\begin{align}
\begin{aligned}
    \beta_k' & \sim \BetaD(1, \alpha), & \qquad k=1,2,\ldots, \\
    \beta_k & = \beta_k' \prod_{\ell=1}^{k-1} (1 - \beta_\ell'), & \qquad k=1,2,\ldots, \\
    & = \beta_k' \left(1 - \sum_{\ell=1}^{k-1}\beta_\ell \right) & \qquad k=1,2,\ldots.
\end{aligned}
\end{align}
Again, the sticky parameter $\kappa_{\rm e}$ promotes self-transitions, reducing state redundancy.

We term this model the \emph{sparse factorial BP-AR-HMM}. Although the graph $G$ can be arbitrarily structured, because of our motivating seizure modeling application with a focus on a spatial-based graph structure, we often describe the sparse factorial BP-AR-HMM as capturing \emph{spatial} correlations. We depict this model in the directed acyclic graphs shown in Figure~\ref{fig:DAGs_HIW_BPARHMM}. Note that while we formally consider a model of only a single event for notational simplicity, our formulation scales straightforwardly to multiple independent events. In this case, everything except the library of AR states $\{\mb{a}_k\}$ becomes event-specific.  If all events share the same channel setup, we can assume the channel covariances $\{\Delta_l\}$ are shared as well.

\begin{figure}[tp]
	\begin{center}
		\includegraphics[width=0.75\textwidth]{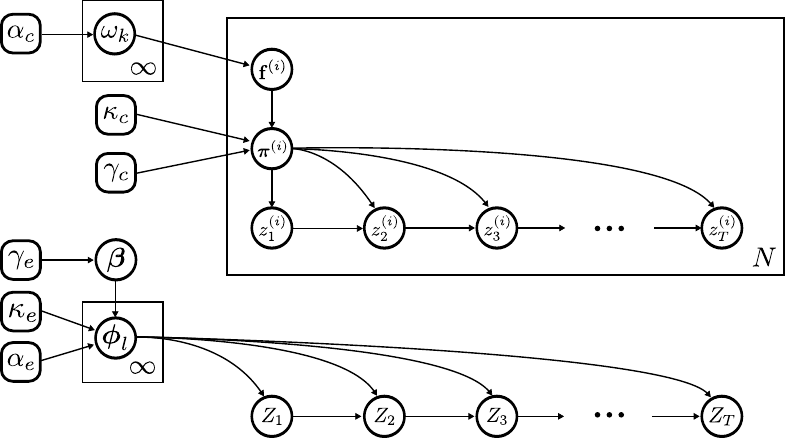}
	\end{center}
	\vspace{-0.1in}\caption[Sparse factorial BP-AR-HMM graphical models]{Referencing the channel state and event state sequences of Figure~\ref{fig:ARHMMgraph}, here we depict the graphical model associated with our Bayesian nonparametric prior specification of Section~\ref{sec:prior}. (\textbf{top}) The channel $i$ feature indicators $\mb{f}^{(i)}$ are samples from a Bernoulli process with weights $\{\omega_k\}$ and constrain the channel transition distributions $\mb{\pi}^{(i)}$. Channel states $z_t^{(i)}$ evolve independently for each channel according to these feature-constrained transition distributions $\mb{\pi}^{(i)}$.  (\textbf{bottom}) The event state $Z_t$ evolves independently of each channel $i$'s state $z_t^{(i)}$ according to transition distributions $\{\mb{\phi}_l\}$, which are coupled by global transition distribution $\mb{\beta}$.}
	\label{fig:DAGs_HIW_BPARHMM}
\end{figure}

\section{Posterior Computations}
\label{sec:MCMC}

Although the components of our model related to the individual channel dynamics are similar to those in the BP-AR-HMM, our posterior computations are significantly different due to the coupling of the Markov chains via the correlated innovations $\mb{\epsilon}_t$. In the BP-AR-HMM, conditioned on the feature assignments, each time series is independent.  Here, however, we are faced with a factorial HMM structure and the associated challenges. Yet the underlying graph structure of the channel dependencies mitigates the scale of these challenges. 

Conditioned on channel sequences $\{\mb{z}_{1:T}^{(\mb{i}')}\}$, we can marginalize $z_{1:T}^{(i)}$; because of the graph structure, we need only condition on a \emph{sparse} set of other channels $\mb{i}'$ (i.e., neighbors of channel $i$ in the graph). This step is important for efficiently sampling the feature assignments $\mb{f}^{(i)}$. 

\begin{algorithm}[tp]
	\caption{Sparse factorial BP-AR-HMM master MCMC sampler}
 	\label{alg:HIW_BPARHMM_main}
 	\begin{small}
 	\begin{algorithmic}[1]
		\For{ each MCMC iteration }
		\State get a random permutation $\mb{h}$ of the channel indices, 
		\For {each channel $i \in \mb{h} $}
		\State sample feature indicators $\mb{f}^{(i)}$ as in Eq.~\eqref{eq:f_post}
		\State sample state sequence $z_{1:T}^{(i)}$ as in Eq.~\eqref{eq:z_post}
		\State sample state transition parameters $\mb{\eta}^{(i)}$ as in Eq.~\eqref{eq:eta_post}
		\EndFor
		\State sample event states sequence $Z_{1:T}$
		\State sample event state transition parameters $\mb{\phi}$ as in Eq.~\eqref{eq:phi_post} 
		\State sample channel AR parameters $\{\mb{a}_k\}$ as in Eq.~\eqref{eq:chStateARCoefsPost}
		\State sample channel $\{\Delta_l\}$ as in Eq.~\eqref{eq:evStateCovPost}
		\State (sample hyperparameters $\gamma_c$, $\kappa_c$, $\alpha_e$, $\kappa_e$, $\gamma_e$, and $\alpha_c = B_0(\Theta)$)
		\EndFor
	\end{algorithmic}
	\end{small}
\end{algorithm}

At a high level, each MCMC iteration proceeds through sampling channel states, events states, dynamic model parameters, and hyperparameters. Algorithm~\ref{alg:HIW_BPARHMM_main} summarizes these steps, which we briefly describe below and more fully in Appendices B-D.

\paragraph{Individual channel variables} 
We harness the fact that we can compute the marginal likelihood of $\mb{y}_{1:T}$ given $\mb{f}^{(i)}$ and the neighborhood set of other channels $\mb{z}_{1:T}^{(\mb{i}')}$ in order to block sample $\{\mb{f}^{(i)}, z_{1:T}^{(i)}\}$.  That is, we first sample $\mb{f}^{(i)}$ marginalizing $z_{1:T}^{(i)}$ and then sample $z_{1:T}^{(i)}$ given the sampled $\mb{f}^{(i)}$.  
Sampling the active features $\mb{f}^{(i)}$ for channel $i$ follows as in~\citet{Fox2009}, using the Indian buffet process (IBP) \cite{Griffiths2005} predictive representation associated with the beta process, but using a likelihood term that conditions on neighboring channel state sequences $\mb{z}_{1:T}^{(\mb{i}')}$ and observations $\mb{y}_{1:T}^{(\mb{i}')}$.  We additionally condition on the event state sequence $Z_{1:T}$ to define the sequence of distributions on the innovations.  Generically, this yields 
\begin{multline}  \label{eq:f_post}
	p\left(f^{(i)}_k \mid y^{(i)}_{1:T}, \mb{y}^{(\mb{i}')}_{1:T}, \mb{z}^{(\mb{i}')}_{1:T}, Z_{1:T}, \mb{F}^{-ik}, \mb{\eta}^{(i)}, \{\mb{a}_k\}, \{\Delta_l\}\right) \propto \\
	p\left(f^{(i)}_k \mid \mb{F}^{-ik} \right)p\left(y^{(i)}_{1:T} \mid \mb{y}^{(\mb{i}')}_{1:T}, \mb{z}^{(\mb{i}')}_{1:T}, Z_{1:T}, \mb{F}^{-ik}, f^{(i)}_k, \mb{\eta}^{(i)}, \{\mb{a}_k\}, \{\Delta_l\} \right).
\end{multline}
Here, $\mb{F}^{-ik}$ denotes the set of feature assignments not including $f^{(i)}_k$.  The first term is given by the IBP prior and the second term is the marginal conditional likelihood (marginalizing $z_{1:T}^{(i)}$). Based on the derived marginal conditional likelihood, feature sampling follows similarly to that of~\citet{Fox2009}.

Conditioned on $\mb{f}^{(i)}$, we block sample the state sequence $z_{1:T}^{(i)}$ using a backward filtering forward sampling algorithm (see \ref{apndx:hmmSumProd})based on a decomposition of the full conditional as
\begin{multline} \label{eq:z_post}
  p\left(z^{(i)}_{1:T} \mid y^{(i)}_{1:T}, \mb{y}^{(\mb{i}')}_{1:T}, \mb{z}^{(\mb{i}')}_{1:T}, \mb{f}^{(i)}, \mb{\eta}^{(i)},\{\mb{a}_k\},\{\Delta_l\}\right) = \\
	p\left(z^{(i)}_{1} \mid y^{(i)}_{1}, \mb{y}^{(\mb{i}')}_{1}, \mb{z}^{(\mb{i}')}_{1}, \mb{f}^{(i)}, \mb{\eta}^{(i)}, \{\mb{a}_k\},\{\Delta_l\}\right)\cdot\\
	\prod_{t=2}^T p\left(z^{(i)}_{t} \mid y^{(i)}_{t:T}, \mb{y}^{(\mb{i}')}_{t:T}, z^{(i)}_{t-1}, \mb{z}^{(\mb{i}')}_{t:T},\mb{f}^{(i)}, \mb{\eta}^{(i)}. \{\mb{a}_k\},\{\Delta_l\}\right).
\end{multline}

For sampling the transition parameters $\mb{\eta}^{(i)}$, we follow \citet[Supplement]{Hughes2012} and sample from the full conditional
\begin{align} \label{eq:eta_post}
	p(\eta^{(i)}_{jk} \mid z^{(i)}_{1:T}, f^{(i)}_k) \propto \frac{(\eta_{jk}^{(i)})^{n^{(i)}_{jk} + \gamma_{\rm c} + \kappa_{\rm c}\delta(j,k) - 1}e^{\eta^{(i)}_{jk}}}{\sum_{k'\mid f^{(i)}_k = 1} \eta^{(i)}_{jk'}},
\end{align}
where $n^{(i)}_{jk}$ denotes the number of times channel $i$ transitions from state $j$ to state $k$. We sample $\mb{\eta}^{(i)}_{j}  = C^{(i)}_j \bar{\mb{\eta}}^{(i)}_j$ from its posterior via two auxiliary variables,
\begin{align}
\begin{aligned}
	\bar{\mb{\eta}}^{(i)}_j & \sim \Dir(\gamma_{\rm c} + \mb{e}_j\kappa_{\rm c} + \mb{n}_j^{(i)}) \\
	C_{j}^{(i)} & \sim \mbox{Gamma}(K\gamma_{\rm c} + \kappa_{\rm c}, 1),
\end{aligned}
\end{align}
where $\mb{n}_j^{(i)}$ gives the transition counts from state $j$ in channel $i$.  

\paragraph{Event variables $\{\mb{\phi}_l,\Delta_l,Z_{1:T}\}$} 
Conditioned on the channel state sequences $\mb{z}_{1:T}$ and AR coefficients $\{\mb{a}_k\}$, we can compute an innovations sequence as $\mb{\epsilon}_t = \mb{y}_t - \mb{A}_{\mb{z}_t}\mb{\widetilde{Y}}_t$, where we recall the definition of $\mb{A}_{k}$ and $\mb{\widetilde{Y}}_t$ from Eq.~\eqref{eq:vec_yt_def}.  These innovations are the observations of the sticky HDP-HMM of Eq.~\eqref{eq:eventHMM}.  For simplicity and to allow block-sampling of $\mb{z}_{1:T}$, we consider a weak limit approximation of the sticky HDP-HMM as in~\cite{Fox2011}. The top-level Dirichlet process is approximated by an $L$-dimensional Dirichlet distribution~\cite{Ishwaran2002}, inducing a finite Dirichlet for $\mb{\phi}_l$:
\begin{align}
\begin{aligned} \label{eq:truncatedHDP}
	\mb{\beta} &\sim \Dir(\gamma_{\rm e}/L, \ldots, \gamma_{\rm e}/L), \\
	\mb{\phi}_l &\sim \Dir(\alpha_{\rm e}\mb{\beta} + \kappa_{\rm e}\mb{e}_l).
\end{aligned}
\end{align}
Here, $L$ provides an upper bound on the number of states in the HDP-HMM.  The weak limit approximation still encourages using a subset of these $L$ states.

Based on the weak limit approximation, we first sample the parent transition distribution $\mb{\beta}$ as in~\cite{Teh2006,Fox2011}, followed by sampling each $\mb{\phi}_l$ from its Dirichlet posterior,
\begin{align} \label{eq:phi_post}
	p\left(\mb{\phi}_l \mid Z_{1:T}, \mb{\beta}\right) \propto \Dir(\alpha_{\rm e}\mb{\beta} + \mb{e}_l\kappa_{\rm e} + \mb{n}_l),
\end{align}
where $\mb{n}_l$ is a vector of transition counts of $Z_{1:T}$ from state $l$ to the $L$ different states. 

Using standard conjugacy results, based on ``observations'' $\mb{\epsilon}_t = \mb{y}_t - \mb{A}_{\mb{z}_t}\mb{\widetilde{Y}}_t$ for $t$ such that $Z_t = l$, the full conditional for $\Delta_l$ is given by
\begin{equation}
	p(\Delta_l \mid \mb{y}_{1:T}, \mb{z}_{1:T}, Z_{1:T}, \{\mb{a}_k\}) \propto \HIW_G(b_l, D_l),
	\label{eq:evStateCovPost}
\end{equation}
where
\begin{align*}
	b_l & = b_0 + |\{t \mid Z_t = l,\; t=1,\ldots,T \}|, \\
	D_l & = D_0 + \sum_{t \mid Z_t = l} \mb{\epsilon}_t\mb{\epsilon}_t^\T.
\end{align*}
Details on how to efficiently sample from a HIW distribution are provided in~\citep{Carvalho2007}.

Conditioned on the truncated HDP-HMM event transition distributions $\{\mb{\phi}_l\}$ and emission parameters $\{\Delta_l\}$, we use a standard backward filtering forward sampling scheme to block sample $Z_{1:T}$.  

\paragraph{AR coefficients, $\{\mb{a}_k\}$}
Each observation $\mb{y}_t$ is generated based on a \emph{matrix} of AR parameters $\mb{A}_{\mb{z}_t} = [\mb{a}_{z_t^{(1)}} \mid \cdots \mid \mb{a}_{z_t^{(N)}}]$.  Thus, sampling $\mb{a}_k$ involves conditioning on $\{\mb{a}_{k'}\}_{k' \neq k}$ and disentangling the contribution of $\mb{a}_k$ on each $\mb{y}_t$. As derived in ~\ref{apndx_sec:HIW_AR_coefs}, the full conditional for $\mb{a}_k$ is a multivariate normal 
\begin{equation}
	p(\mb{a}_k \mid \mb{y}_{1:T,} \mb{z}_{1:T}, Z_{1:T}, \{\mb{a}_{k'}\}_{k' \neq k}, \{\Delta_l\} ) \propto \Ncal(\mb{\mu}_k, \Sigma_k ),
	\label{eq:chStateARCoefsPost}
\end{equation}
where
\begin{align*}
	\Sigma_k^{-1} & = \Sigma_0^{-1} + \sum_{t=1}^T \mb{\bar{Y}}_t^{(\mb{k}^+)} \Delta_{Z_t}^{-1(\mb{k}^+,\mb{k}^+)} \left(\mb{\bar{Y}}_t^{(\mb{k}^+)}\right)^\T, \\
	\Sigma_k^{-1}\mb{\mu}_k & =  \sum_{t=1}^T  \mb{\bar{Y}}_t^{(\mb{k}^+)}\left(\Delta_{Z_t}^{-1(\mb{k}^+,\mb{k}^+)} \mb{y}_t^{(\mb{k}^+)} + \Delta_{Z_t}^{-1(\mb{k}^+,\mb{k}^-)} \mb{\epsilon}_t^{(\mb{k}^-)}\right).
\end{align*}
The vectors $\mb{k}^+$ and $\mb{k}^-$ denote the indices of channels assigned and not assigned to state $k$ at time $t$, respectively. We use these to index into the rows and columns of the vectors $\mb{\epsilon}_t$, $\mb{y}_t$, and matrix $\Delta_{Z_t}$. Each column of matrix $\bar{\mb{Y}}_t^{(\mb{k}^+)}$ is the previous $r$ observations for one of the channels assigned to state $k$ at time $t$.

\paragraph{Hyperparameters}
See \ref{apndx:hypVarsPost} for the prior and full conditionals of the hyperparameters $\gamma_c$, $\kappa_c$, $\alpha_e$, $\kappa_e$, $\gamma_e$, and $\alpha_c = B_0(\Theta)$.

\section{Experiments}

\subsection{Simulation Experiments}
\label{sec:HIW_BPARHMM_simExps}

To initially explore some characteristics of our sparse factorial BP-AR-HMM, we examined a small simulated dataset of six time series in a 2x3 spatial arrangement, with vertices connecting all adjacent nodes (i.e., two cliques of 4 nodes each).  We generated an event of length 2000 time points as follows.  We defined five first-order AR channel states linearly spaced between $-0.9$ and $0.9$ and three event states with covariances shown in the bottom left of Figure~\ref{fig:HIW_BPARHMM_simExp}.  Channel and event state transition distributions were set to $0.99$ and $0.9$, respectively, for a self-transition and uniform between the other states.  Channel feature indicators $f^{(i)}_k$ were simulated from an IBP with $\alpha_c = 10$ (no channel had indicators exceeding the five specified states).  The sampled $\mb{f}^{(i)}$ were then used to modify the channel state transition distributions by setting to 0 transitions to states with $f^{(i)}_k=0$ and then renormalizing.  Using these feature-constrained transition distributions, we simulated sequences $z_{1:T}^{(i)}$ for each channel $i=1,\dots,6$ and for $T=2000$.  The event sequence $Z_{1:T}$ was likewise simulated.  Based on these sampled state sequences, and using the defined state-specific AR coefficients and channel covariances, we generated observations $\mb{y}_{1:T}$ as in Eq.~\eqref{eq:vec_yt_def}.

\begin{figure}[tbp]
	\begin{center}
		\includegraphics[width=\textwidth]{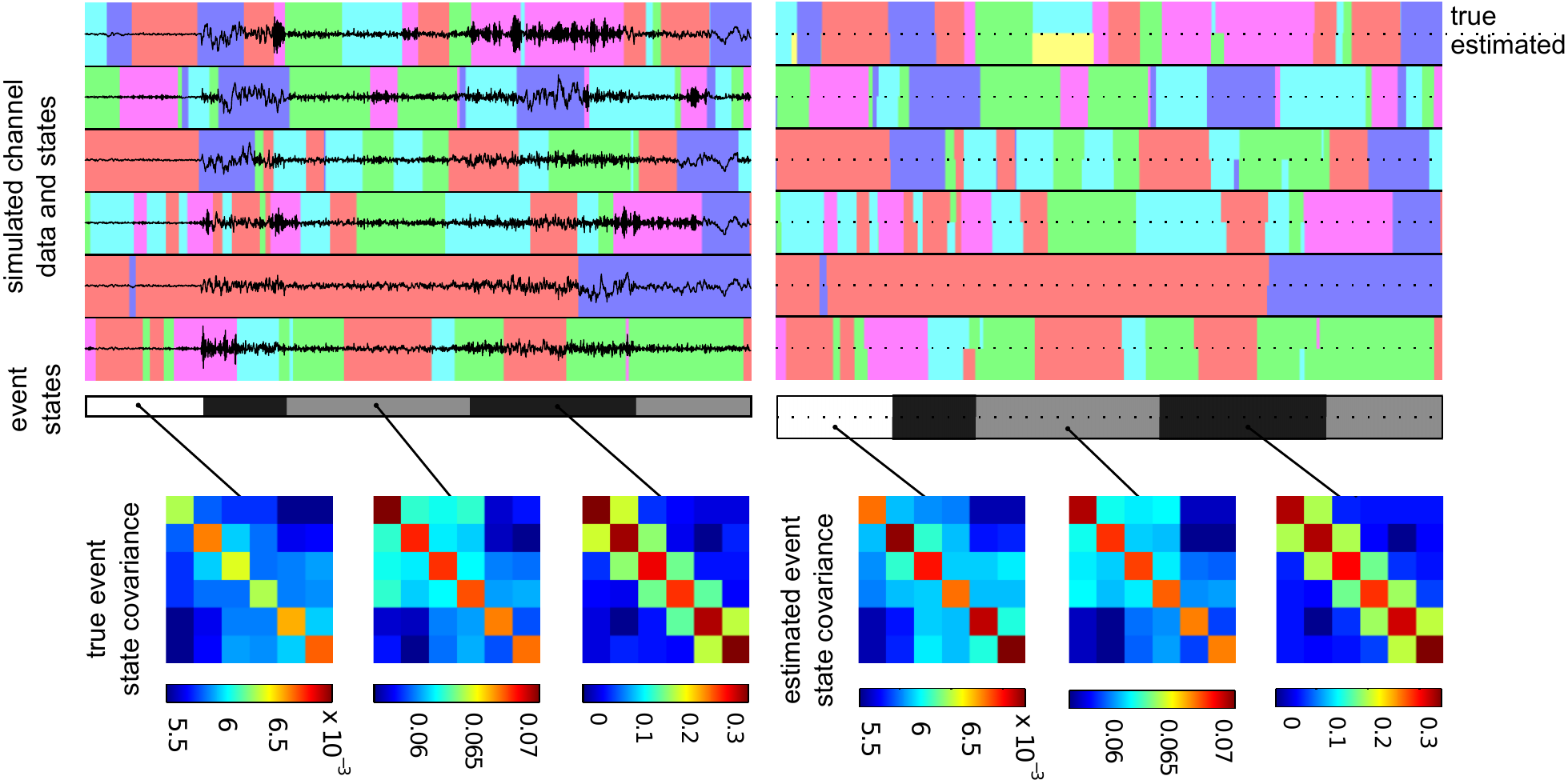}
	\end{center}
	\vspace{-0.1in}\caption[Sparse factorial BP-AR-HMM simulation results]{(\textbf{top left}) The six simulated channel time series overlaid on the five true channel states, denoted by different colors; the three true event states are shown in grayscale in the bar below. (\textbf{top right}) The true and estimated channel (color) and event (grayscale) states shown below for comparison after 6000 MCMC iterations. The true (\textbf{bottom left}) and estimated (\textbf{bottom right}) event state innovation covariances.}
	\label{fig:HIW_BPARHMM_simExp}
\end{figure}

\begin{table}[tbp]
	\centering
	\begin{tabular}{crrr}
		\toprule
		channel state & true $\mb{a}_k$ & post. $\mb{a}_k$ mean & post. $\mb{a}_k$ 95\% interval \\ \midrule
		1 & -0.900 & -0.906 & [-0.917, -0.896] \\
		2 & -0.450 & -0.456 & [-0.474, -0.436] \\
		3 & 0 & -0.009 & [-0.038, 0.020] \\
		4 & 0.450 & 0.445 & [0.425, 0.466] \\
		5 & 0.900 & 0.902 & [0.890, 0.913] \\
		\bottomrule
	\end{tabular}
	\vspace{-0.1in}\caption[Sparse factorial BP-AR-HMM AR parameter posterior estimates.]{The true and estimated values for the channel state coefficients in the simulated dataset. We include the posterior mean and 95\% credible interval.}
	\label{tbl:HIW_BPARHMM_simCoefs}
\end{table}

We ran our MCMC sampler for 6000 iterations, discarding the first 1000 as burn-in and thinning the chain by 10. Figure~\ref{fig:HIW_BPARHMM_simExp} shows the generated data and its true states along with the inferred states and learned channel covariances for a representative posterior sample. The event state matching is almost perfect, and the channel state matching is quite good, though we see that the sampler added an additional (yellow) state in the middle of the first time series when it should have assigned that section to the cyan state. The scale and structure of the estimated event state covariances match the true covariances quite well. Furthermore, Table~\ref{tbl:HIW_BPARHMM_simCoefs} shows how the posterior estimates of the channel state AR coefficients also center well around the true values. 

\subsection{Parsing a Seizure}
\label{sec:szParsing}

We tested the sparse factorial BP-AR-HMM on two similar seizures (events) from a patient of the Children's Hospital of Pennsylvania. These seizures were chosen because qualitatively they displayed a variety of dynamics throughout the beginning, middle, and end of the seizure and thus are ideal for exploring the extent to which our sparse factorial BP-AR-HMM can parse a set of rich neurophysiologic signals. We used the 90 seconds of data after the clinically-determined starts of each seizure from 16 channels, whose spatial layout in the electrode grid is shown in Figure~\ref{fig:szOnset_Parsing} along with the graph encoding our conditional independence assumptions. The data were low-pass filtered and downsampled from 200 to 50 Hz, preserving the clinically important signals but reducing the computational burden of posterior inference. The data was also scaled to have 99\% of values within [-10, 10] for numerical reasons. We examined a $5$th-order sparse factorial BP-AR-HMM and ran 10 MCMC chains for 6000 iterations, discarding 1000 samples as burn-in and using 10-sample thinning. 

The sparse factorial BP-AR-HMM inferred state sequences for the sample corresponding to a minimum expected Hamming distance criterion (\cite{Fox2011}) are shown in Figure~\ref{fig:szOnset_Parsing}. The results were analyzed by a board-certified epileptologist who agreed with the model's judgement in identifying the subtle changes from the background dynamic (cyan) initially present in all channels. The model's grouping of spatially-proximate channels into similar state transition patterns (e.g., channels 03, 07, 11, 15) was clinically intuitive and consistent with his own reading of the raw EEG. Using only the raw EEG, and prior to disclosing our results, he independently identified roughly six points in the duration of the seizure where the dynamics fundamentally change. The three main event state transitions shown in Figure~\ref{fig:szOnset_Parsing} occurred almost exactly at the same time as three of his own marked transitions. The fourth coincides with a major shift in the channel dynamics with most channels transitioning to the green dynamic. The other two transitions he marked that are not displayed occurred after this onset period.  From this analysis, we see that our event states provide an important global summary of the dynamics of the seizure that augments the information conveyed from the channel state sequences.

\begin{figure}[tp]
	\begin{center}
		\includegraphics[width=\textwidth]{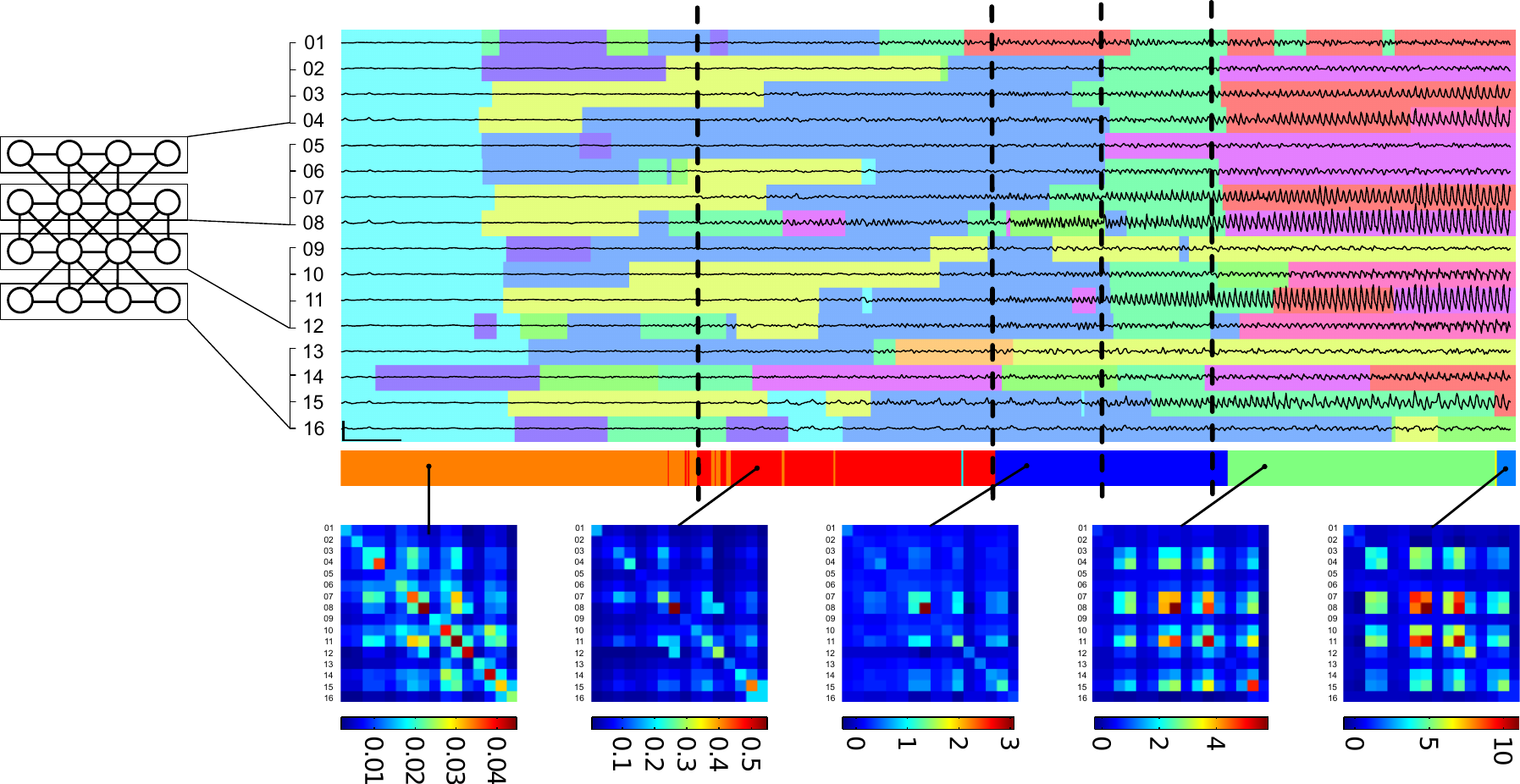}
	\end{center}
	\vspace{-0.1in}\caption[Sparse factorial BP-AR-HMM seizure onset parsing]{The graph used for a 16 channel iEEG electrode and corresponding traces over 25 seconds of a seizure onset with colors indicating the inferred channel states. The event states are shown below along with the associated innovation covariances. Vertical dashed lines indicate the EEG transition times marked independently by an epileptologist. Vertical and horizontal scale bars denote 1 mV and 1 second, respectively. }
	\label{fig:szOnset_Parsing}
\end{figure}

%
%


\paragraph{Clinical relevance}
While interpreting these state sequences and covariances from the model, it is important to keep in mind that they are ultimately estimates of a system whose parsing even highly-trained physicians disagree upon. Nevertheless, we believe that the event states directly describe the activity of particular clinical interest. 

In modeling the correlations between channels, the event states give insight into how different physiologic areas of the brain interact over the course of a seizure. In the clinical workup for resective brain surgery, these event states could help define and specifically quantify the full range of ways in which neurophysiologic regions initiate seizures and how others are recruited over the numerous seizures of a patient. In addition, given fixed model parameters, our model can fit the channel and event state sequences of an hour's worth of 64-channel EEG data in a matter of minutes on a single 8-core machine, possibly facilitating epileptologist EEG annotation of long-term monitoring records.

The ultimate clinical aim of this work, however, involves understanding the relationship between epileptic bursts and seizures. Because the event state aspect of our model involves a Markov assumption, the intrinsic length of the event has little bearing on the states assigned to particular time points. Thus, these event states allow us to straightforwardly compare the neurophysiologic relationship dynamics in short bursts (often less than two seconds long) to those in much longer seizures (on the order of two minutes long), as explored in Section~\ref{sec:subclinical}.  Prior to this analysis, we first examine the importance of our various model components by comparing to baseline alternatives. 

\subsection{Model Comparison}

\paragraph{The advantages of a spatial model}
We explored the extent to which the spatial information and sparse dependencies encoded in the HIW prior improves our predictions of heldout data relative to a number of baseline models. To assess the impact of the sparse dependencies induced by the Gaussian graphical model for $\mb{\epsilon}_t$, we compare to a full-covariance model with an IW prior on $\Delta_l$ (dense factorial). For assessing the importance of spatial correlations, we additionally compare to two alternatives where channels evolve independently: the BP-AR-HMM of~\citet{Fox2009} and an AR-HMM without the feature-based modeling provided by the beta process \cite{Fox2011b}. Both of these models use inverse gamma (IG) priors on the individual channel innovation variances.  We learned a set of AR coefficients $\{\mb{a}_k\}$ and event covariances $\{\Delta_l\}$ on one seizure and then computed the heldout log-likelihood on a separate seizure, constraining it use the learned model parameters from the training seizure.

For the training seizure, MCMC samples were collected over 5000 samples across 10 chains, each with a 1000-sample burn in and 10-sample thinning. To compute the predictive log-likelihood of the heldout seizure, we analytically marginalized the heldout event state sequence $Z_{1:T}$ but perform a Monte Carlo integration over the feature vectors $\mb{f}^{(i)}$ and channel states $\mb{z}_{1:T}$ using our MCMC sampler. For each original MCMC sample generated from the training seizure, a secondary chain is run fixing $\{\mb{a}_k\}$ and $\{\Delta_l\}$ and sampling $z_t^{(i)}$, $Z_t$, $\mb{f}^{(i)}$, $\mb{\eta}^{(i)}$, and $\{\mb{\phi}_l\}$ for the heldout seizure. We approximate $p(\mb{y}_{1:T} \mid \{\mb{\phi}_l\}, \{\mb{a}_k\}, \{\Delta_l\})$ by averaging the secondary chain's closed-form $p(\mb{y}_{1:T} \mid \mb{z}_{1:T}, \{\mb{\phi}_l\}, \{\mb{a}_k\}, \{\Delta_l\})$, described in \ref{apndx:indivVarsPost}.

\begin{figure}[tbp]
	\begin{center}
		\includegraphics[width=\textwidth]{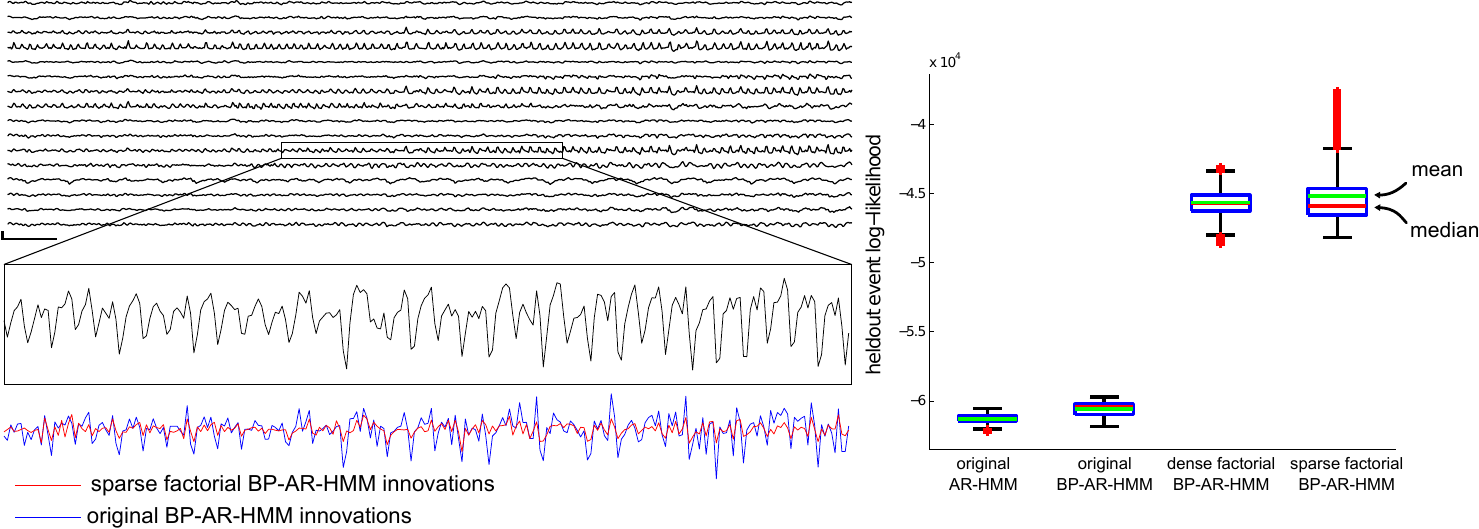}
	\end{center}
	\vspace{-0.1in}\caption[Comparison between factorial and original BP-AR-HMMs.]{(\textbf{left}) An example 16-channel clip of iEEG with the middle section of one channel zoomed in and innovations from the original BP-AR-HMM and our sparse factorial BP-AR-HMM shown below. (\textbf{right}) Boxplots of the heldout event log-likelihoods from the two original and factorial models with mean and median posterior likelihood given in green and red lines. Boxes denote the middle 50\% prediction interval.}
	\label{fig:HIW_BPARHMM_modelComp}
\end{figure}

Figure~\ref{fig:HIW_BPARHMM_modelComp} (left) shows how conditioning on the innovations of neighboring channels in the sparse factorial model improves the prediction of an individual channel, as seen by its reduced innovation trace relative to the original BP-AR-HMM. The quantitative benefits of accounting for these correlations are seen in our predictions of heldout events, as depicted in Figure~\ref{fig:HIW_BPARHMM_modelComp} (right), which compares the heldout log-likelihoods for the original and the factorial models listed above. As expected, the factorial models have significantly larger predictive power than the original models. Though hard to see due to the large factorial/original difference, the BP-based model also improves on the standard non-feature-based AR-HMM. Performance of the sparse factorial model is also at least as competitive as a full-covariance model (dense factorial).  We would expect to see even larger gains for electrode grids with more channels due to the parsimonious representation presented by the graphical model.  Regardless, these results demonstrate that the assumptions of sparsity in the channel dependencies do not adversely affect our performance.

\begin{figure}[tp]
	\begin{center}
		\includegraphics[width=0.5\textwidth]{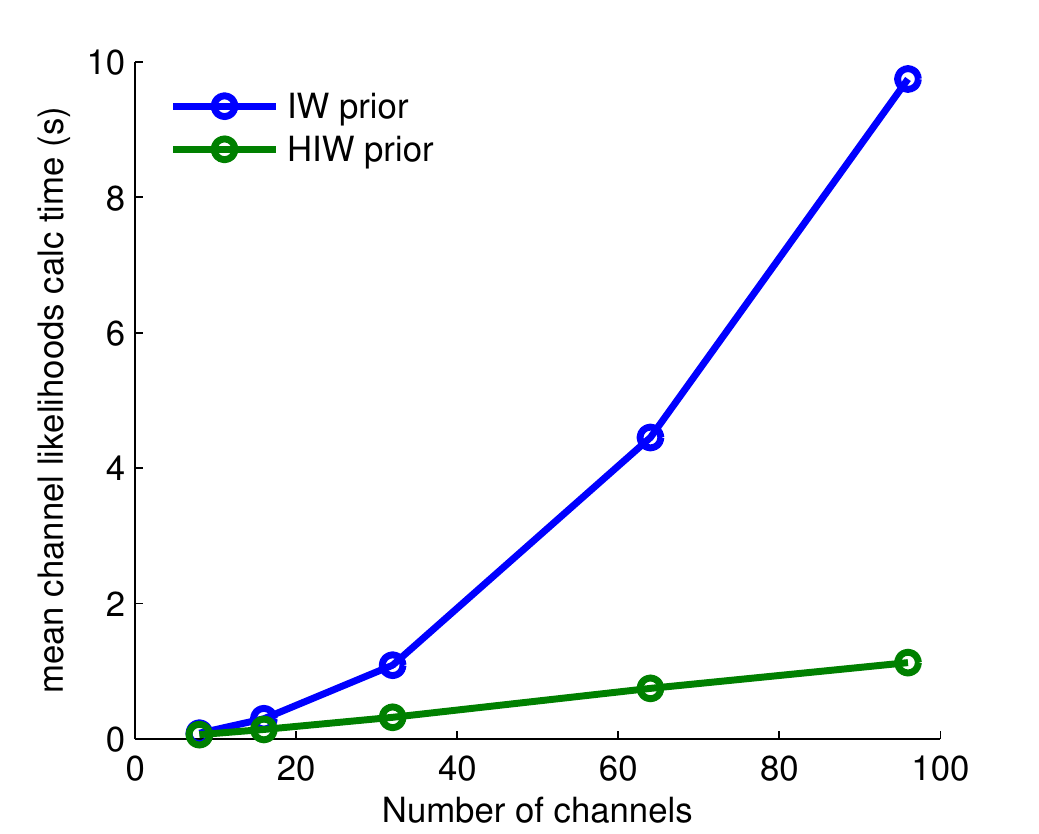}
	\end{center}
	\vspace{-0.1in}\caption[Computational scaling of IW and HIW priors with the number of channels]{The average time per MCMC iteration required to calculate all of the channel likelihoods under each AR model at each time point. }
	\label{fig:chLiksCalcTime}
\end{figure}
\paragraph{The advantages of sparse factorial dependencies}
In addition to providing a parsimonious modeling tool, the sparse dependencies among channels induced by the HIW prior allow our computations to scale linearly to the large number of channels present in iEEG. We compared a dense factorial BP-AR-HMM (entailing a fully-connected spatial graph) and a sparse factorial BP-AR-HMM on five datasets of 8, 16, 32, 64, and 96 channels (from three 32-channel electrodes) from the same seizure used previously. We ran the two models on each of the five datasets for at least 1000 MCMC iterations, using a profiler to tabulate the time spent in each step of the MCMC iteration. 

Figure~\ref{fig:chLiksCalcTime} shows the average time required to calculate the channel likelihoods at each time point under each AR channel state.  This computation is used both for calculating the marginal likelihood (averaging over all the state sequences $z_{1:T}^{(i)}$) required in active feature sampling as well as in sampling the state sequences $z_{1:T}^{(i)}$. In our sparse factorial model, each channel has a constant set of $M$ dependencies, assuming $M$ neighboring channels.  As such, the channel likelihood computation at each time point has an $O(M)$ complexity, implying an $O(MN)$ complexity for calculating the likelihoods of all $N$ channels at each time point. In contrast, the likelihood computation at each time point under the full covariance model had complexity  $O(N)$, implying $O(N^2)$ for calculating all the channel likelihoods.  For $M \ll N$, as is typically the case, our sparse dependency model is significantly more computationally efficient.

Anecdotally, we also found that the IW prior experiments---especially those with larger number of channels---tended to occasionally have numerical underflow problems associated with the inverse term $\Delta^{-1(\mb{i}',\mb{i}')}_{Z_t}$ in the conditional channel likelihood calculation. This underflow in the IW prior model calculations is not surprising since the matrices inverted are of dimension $N-1$ (for $N$ channels), whereas in the HIW prior, the sparse spatial dependencies of the electrode grids make these matrices no larger than eight-by-eight. 

\subsection{Comparing Epileptic Events of Different Scales}
\label{sec:subclinical}
We applied our sparse factorial BP-AR-HMM to six channels of iEEG over 15 events from a human patient with hippocampal depth electrodes. These events comprise 14 short sub-clinical epileptic bursts of roughly five to eight seconds and a final, 2-3 minute clinical seizure. Our hypothesis was that the sub-clinical bursts display initiation dynamics similar to those of a full, clinical seizure and thus contain information about the seizure-generation process.

\begin{figure}[tp]
	\begin{center}
		\includegraphics[width=0.8\textwidth]{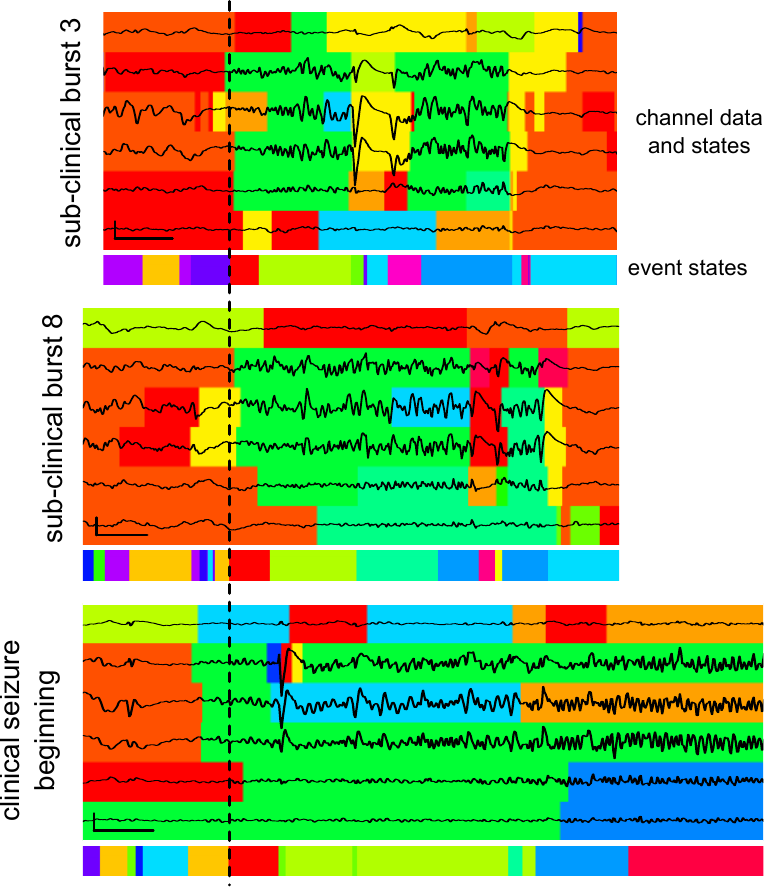}
	\end{center}
	\vspace{-0.1in}\caption[Sparse factorial BP-AR-HMM burst and seizure comparison]{6 iEEG traces from two sub-clinical bursts and onset of the single seizure with colors indicating inferred channel and event states. The dashed lines indicates the start of the red state in the three events. Vertical and horizontal scale bars denote 1mv and 1 second, respectively. } 
	\label{fig:betaBurst_Parsing}
\end{figure}

The events were automatically extracted from the patient's continuous iEEG record by taking sections of iEEG whose median line-length feature \cite{Esteller2001} crossed a preset threshold, also including 10 seconds before and after each event. The iEEG was preprocessed in the same way as in the previous section. The six channels studied came from a depth electrode implanted in the left temporal lobe of the patient's brain. We ran our MCMC sampler jointly on the 15 events.  In particular, the AR channel state and event state parameters, $\{\mb{a}_k\}$ and $\{\Delta_l\}$, were shared between the 15 events such that the parsings of each recording jointly informed the posteriors of these shared parameters.  The hyperparameter settings, number of MCMC iterations, chains, and thinning was as in the experiment of Section~\ref{sec:szParsing}.

Figure~\ref{fig:betaBurst_Parsing} compares two of the 14 sub-clinical bursts and the onset of the single seizure. We have aligned the three events relative to the beginnings of the red event state common to all three, which we treat as the start of the epileptic activity. The individual channel states of the four middle channels are also all green throughout most of the red event state. It is interesting to note that at this time the fifth channel's activity in all three events is much lower than those of the three channels above it, yet it is still assigned to the green state and continues in that state along with the other three channels as the event state switches from the red to the lime green state in all three events. While clinical opinions can vary widely in EEG reading, a physician would most likely not consider this segment of the fifth channel similar to the other three, as our model consistently does. But on a relative voltage axis, the segments actually look quite similar. In a sense, the fifth channel has the same dynamics as the other three but just with smaller magnitude. This kind of relationship is difficult for the human EEG readers to identify and shows how models such as ours are capable of providing new perspectives not readily apparent to a human reader. Additionally, we note the similarities in event state transitions.

The similarities mentioned above, among others, suggest some relationship between these two different classes of epileptic events. However, all bursts make a notable departure from the seizure: a large one-second depolarization in the middle three channels, highlighted at the end by the magenta event state and followed shortly thereafter by the end of the event. Neither the states assigned by our model nor the iEEG itself indicates that dynamic present in the clinical seizure. This difference leads us to posit that perhaps these sub-clinical bursts are a kind of false-start seizure, with similar onset patterns but a disrupting discharge that prevents the event from escalating to a full-blown seizure.

\section{Conclusion}
In this work, we develop a sparse factorial BP-AR-HMM model that allows for shared dynamic regimes between a variable number of time series, asynchronous regime-switching, and an unknown dictionary of dynamic regimes. Key to our model is capturing the between-series correlations and their evolution with a Markov-switching multivariate innovations process.  For scalability, we assume a sparse dependency structure between the time series using a Gaussian graphical model.

This model is inspired by challenges in modeling high-dimensional intracranial EEG time series of seizures, which contain a large variety of small- and large-scale dynamics that are of interest to clinicians. We demonstrate the value of this unsupervised model by showing its ability to parse seizures in a clinically intuitive manner and to produce state of the art out-of-sample predictions on the iEEG data. Finally, we show how our model allows for flexible, large-scale analysis of different classes of epileptic events, opening new, valuable clinical research directions previously too challenging to pursue.  Such analyses have direct relevance to clinical decision-making for patients with epilepsy.



\pagebreak

\bibliographystyle{model1-num-names}
\bibliography{wulsin_dissertation}

\appendix

\section{HMM sum-product algorithm}
\label{apndx:hmmSumProd}

Consider a hidden Markov model of a sequence $y_{1:T}$ with corresponding discrete states $z_{1:T}$, each of which takes one of $K$ values. The joint probability of $y_{1:t}$ and $z_t = \ell$ is
\begin{equation}
	p(y_{1:t}, z_t = \ell) = p(y_t \mid z_t = \ell) \sum_{k=1}^K p(y_{1:t-1}, z_{t-1} = k)p(z_t = \ell \mid z_{t-1} = k),
	\label{eq:HMM_fwdMsgs}
\end{equation}
sometimes called a \emph{forward message}, which depends on a recursive call for $y_{1:t-1}$ and $z_{t-1} = k$ with
\begin{equation}
	p(y_1, z_1 = k) = p(y_1 \mid z_1 = k)p(z_1 = k).
\end{equation}

\begin{algorithm}[tp]
	\caption{HMM forward-filtering algorithm for calculating $p(y_{1:T})$}
 	\label{alg:HMM_margLik}
 	\begin{small}
 	\begin{algorithmic}[1]
		\State let $\mb{\pi}_0$ and $\mb{\pi} = (\mb{\pi}_1 | \cdots | \mb{\pi}_K)^\T$ be the initial and transition distributions 
		\State let $\mb{u}_t \in \mathbb{R}_+^{K}$ be the likelihoods $p(y_t \mid z_t = k)$ for each $k$
		\State let $\mb{\xi}_t \in \mathbb{R}_+^{K}$ be the forward messages $p(y_{1:t-1}, z_{t-1} = k)$ at time $t$ for each $k$
		\State normalize each $\mb{u}_t$ to sum to 1, preventing underflow during computations
		\For {$t = 1,\ldots,T$}
		\State store the marginal log probability over $\mb{u}_t$ in $v_t$ : $v_t \leftarrow \log\left(\mb{1}^\T\mb{u}_t\right)$
		\State normalize $\mb{u}_t$ : $\widetilde{\mb{u}}_t \leftarrow \mb{u}_t / \exp(v_t)$
		\EndFor
		\State calculate and normalize initial forward messages
		\State \quad $\mb{\xi}_1 \leftarrow \widetilde{\mb{u}}_1 \circ \mb{\pi}_0$
		\State \quad $w_1 \leftarrow \mb{1}^\T \mb{\xi}_1$, $\widetilde{\mb{\xi}}_1 \leftarrow \mb{\xi}_1 / w_1$
		\State \quad $v_1 \leftarrow v_1 + \log(w_1)$
		\State propagate messages forward through each time point
		\For {$t = 2,\ldots,T$}
		\State transmit messages forward : $\mb{\xi}_t \leftarrow \widetilde{\mb{u}}_t \circ \left(\mb{\pi}^\T \widetilde{\mb{\xi}}_{t-1} \right)$
		\State normalize : $w_t \leftarrow \mb{1}^\T \mb{\xi}_t$, $\widetilde{\mb{\xi}}_t \leftarrow \mb{\xi}_t / w_t$
		\State $v_t \leftarrow v_t + \log(w_t)$
		\EndFor
		\State calculate final marginal log likelihood : $\log p(y_{1:T}) = \sum_{t=1}^T v_t$
	\end{algorithmic}
	\end{small}
\end{algorithm}

Calculating the marginal likelihood $p(y_{1:T})$ simply involves one last marginalization over $z_T$,
\begin{equation}
	p(y_{1:T}) = \sum_{k=1}^K p(y_{1:T}, z_T = k).
\end{equation}
Algorithm~\ref{alg:HMM_margLik} provides a numerically stable recipe for calculating this marginal likelihood.

\begin{algorithm}[tp]
	\caption{HMM backward-filtering forward-sampling algorithm for block-sampling $z_{1:T}$}
 	\label{alg:HMM_blockSampZs}
 	\begin{small}
 	\begin{algorithmic}[1]
		\State let $\mb{\pi}_0$ and $\mb{\pi} = (\mb{\pi}_1 | \cdots | \mb{\pi}_K)^\T$ be the initial and transition distributions 
		\State let $\mb{u}_t \in \mathbb{R}_+^{K}$ be the likelihoods $p(y_t \mid z_t = k)$ for each $k$
		\State let $\mb{\zeta}_t \in \mathbb{R}_+^{K}$ be the backward messages $p(y_{t+1:T} \mid z_t = k)$ at $t$ for each $k$
		\State normalize each $\mb{u}_t$ to sum to 1, preventing underflow during computations
		\For {$t = 1,\ldots,T$}
		\State $\widetilde{\mb{u}}_t \leftarrow \mb{u}_t / \left(\mb{1}^\T\mb{u}_t\right)$
		\EndFor
		\State calculate backward messages over all time points
		\State $\widetilde{\mb{\zeta}}_T = \mb{1}$
		\For {$t = T-1,\ldots,1$}
		\State transmit messages backward : $\mb{\tau}_{t+1} \leftarrow \widetilde{\mb{u}}_{t+1} \circ \widetilde{\mb{\zeta}}_{t+1}$, $\mb{\zeta}_t \leftarrow \mb{\pi} \mb{\tau}_{t+1}$
		\State normalize : $\widetilde{\mb{\zeta}}_t \leftarrow \mb{\zeta}_t / (\mb{1}^T \mb{\zeta}_t)$
		\EndFor
		\State $\mb{\tau}_{1} \leftarrow \widetilde{\mb{u}}_{1} \circ \widetilde{\mb{\zeta}}_{1}$
		\State sample first time point 
		\State $\mb{q}_1 \leftarrow \mb{\pi}_0 \circ \mb{\tau}_{1}$, $\widetilde{\mb{q}}_1 \leftarrow \mb{q}_1 / (\mb{1}^\T \mb{q})$
		\State $z_1 \sim \widetilde{\mb{q}}_1$
		\State sample other time points
		\For {$t = 2,\ldots,T$}
		\State $\mb{q}_t \leftarrow \mb{\pi}_{z_{t-1}} \circ \mb{\tau}_{t}$, $\widetilde{\mb{q}}_t \leftarrow \mb{q}_t / (\mb{1}^\T\mb{q}_t)$
		\State $z_t \sim \widetilde{\mb{q}}_t$
		\EndFor
	\end{algorithmic}
	\end{small}
\end{algorithm}

We can sample the states $z_{1:T}$ from their joint distribution, also known as block sampling, via a similar recursive formulation. The conditional likelihood of the last $T-t$ samples given the state at $t$ is
\begin{multline}
	p(y_{t+1:T} \mid z_t = \ell) = \\ \sum_{k=1}^K p(y_{t+1} \mid z_{t+1} = k) p(y_{t+2:T} \mid z_{t+1} = k) p(z_{t+1} = k \mid z_{t} = \ell),
	\label{eq:HMM_bkwdMsgs}
\end{multline}
which depends recursively on the \emph{backward messages} $p(y_{t+2:T} \mid z_{t+1} = k)$ for each $k \in \{1,\ldots,K\}$. To sample $z_{1:T}$ at once we use the joint posterior distribution of the entire state sequence $z_{1:T}$, which factors into
\begin{multline}
	p(z_{1:T} \mid y_{1:T}) = \\ p(z_{T} \mid z_{T-1}, y_{1:T}) p(z_{T-1} \mid z_{T-2}, y_{1:T}) \cdots p(z_2 \mid z_1, y_{1:T}) p(z_1 \mid y_{1:T})
\end{multline}
If we first sample $z_1$, we can condition on it to then sample $z_2$ and continue in this fashion until we finish with $z_T$. The posterior for $z_t$ is the product of the backward message, the likelihood of $y_t$ given $z_t$, and the probability of $z_t$ given $z_{t-1}$, 
\begin{equation}
	p(z_t \mid y_{1:T}) \propto p(y_{t+1:T} \mid z_t) p(y_t \mid z_t) p(z_t \mid z_{t-1}),
\end{equation}
where $p(z_t \mid z_{t-1}) \triangleq p(z_1)$ for $t=1$. A numerical stable recipe for this backward-filtering forward-sampling is given in Algorithm~\ref{alg:HMM_blockSampZs}.

\section{Individual channel variables posterior}
\label{apndx:indivVarsPost}

Sampling the variables associated with the individual channel $i$ involves first sampling active features $\mb{f}^{(i)}$ (while marginalizing $z_{1:T}^{(i)}$), then conditioning on these feature assignments $\mb{f}^{(i)}$ to block sample the state sequence $z_{1:T}^{(i)}$, and finally sampling the transition distribution $\mb{\pi}^{(i)}$ given the feature indicators $\mb{f}^{(i)}$ and state sequence $z^{(i)}_{1:T}$. Explicit algorithms for this sampling are given in \citet[Section 4.2.1]{Wulsin2013b}. 

\paragraph{Channel marginal likelihood}
Let $\mb{i}' \subseteq \{1,\ldots,N\}$ index the neighboring channels in the graph upon which channel $i$ is conditioned. The conditional likelihood of observation $y_t^{(i)}$ under AR model $k$ given the neighboring observations $\mb{y}_t^{(\mb{i}')}$ at time $t$ is
\begin{equation}
    p\left(y_t^{(i)} \mid \mb{\widetilde{y}}^{(i)}_t, \mb{y}_t^{(\mb{i}')}, z^{(i)}_t = k, \mb{z}^{(\mb{i}')}_t, Z_t, \{\mb{a}_k\}, \{\Delta_l\} \right) \propto {\cal N}\left(\widetilde{\mu}_t, \widetilde{\sigma}_t^2 \right)
    \label{eq:chanLik_t}
\end{equation}
for
\begin{align}
\begin{aligned}
    \widetilde{\mu}_t &= \mb{a}^T_{k}\mb{\widetilde{y}}^{(i)}_t + \Delta_{Z_t}^{(i,\mb{i}')} \Delta_{Z_t}^{-1(\mb{i}',\mb{i}')} \left(\mb{y}_t^{(\mb{i}')} - \mb{A}_{\mb{z}^{(\mb{i}')}} \mb{\widetilde{Y}}_t^{(\mb{i}')} \right), \\
    \widetilde{\sigma}_t^2 &= \Delta_{Z_t}^{(i,i)} - \Delta_{Z_t}^{(i,\mb{i}')}\Delta_{Z_t}^{-1(\mb{i}',\mb{i}')}\Delta_{Z_t}^{(\mb{i}',i)},
    \label{eq:chanLik_t_muSigSq}
\end{aligned}
\end{align}
which follows from the conditional distribution of the multivariate normal \citep[pg.~579]{Gelman2004}. Using the forward-filtering scheme (see Algorithm~\ref{alg:HMM_margLik}) to marginalize over the exponentially many state sequences $z_{1:T}^{(i)}$, we can calculate the channel marginal likelihood,
\begin{equation}
    p\left(y^{(i)}_{1:T} \mid \mb{y}_{1:T}^{(\mb{i}')}, \mb{z}^{(\mb{i}')}_{1:T}, Z_{1:T}, \mb{f}^{(i)}, \mb{\eta}^{(i)}, \{\mb{a}_k\},\{\Delta_l\}\right), 
    \label{eq:HIW_BPARHMM_chMargLik}
\end{equation}
of channel $i$'s observations over all $t=1,\ldots,T$ given the observations $\mb{y}_{1:T}^{(\mb{i}')}$ and the assigned states $\mb{z}_{1:T}^{(\mb{i}')}$ of neighboring channels $\mb{i}'$ and given the event state sequence $Z_{1:T}$. As previously discussed, taking the non-zero elements of the infinite-dimensional transition distributions $\mb{\pi}^{(i)}$, derived from $\mb{f}^{(i)}$ and $\mb{\eta}^{(i)}$ as in Eq.~\eqref{eq:pi_eta_f}, yields a set of $K^{(i)}$-dimensional active feature transition distributions $\widetilde{\mb{\pi}} = \{\widetilde{\mb{\pi}}_j\}$, reducing this marginalization to a series of matrix-vector products. 

\paragraph{Sampling active features, $\mb{f}^{(i)}$}
We briefly describe the active feature sampling scheme given in detail by \citet{Fox2009}. Recall that for our HIW-spatial BP-AR-HMM, we need to condition on neighboring channel state sequences $\mb{z}^{(\mb{i}')}_{1:T}$ and event state sequences $Z_{1:T}$.  Sampling the feature indicators $\mb{f}^{(i)}$ for channel $i$ via the Indian buffet process (IBP) involves considering those features shared by other channels and those unique to channel $i$. Let $K_+ = \sum_{k=1}^{K} \mb{1}\left(f^{(1)}_k \vee \cdots \vee f^{(N)}_k\right)$ denote the total number of active features used by at least one of the channels. We consider the set of shared features across channels not including those specific to channel $i$ as ${\cal S}^{(-i)} \subseteq \{1,\ldots,K_+\}$ and the set of unique features for channel $i$ as ${\cal U}^{(i)} \subseteq \{1,\ldots,K_+\} / {\cal S}^{(-i)}$.

\subparagraph{Shared features}
The posterior for each shared feature $k \in {\cal S}^{(-i)}$ for channel $i$ is given by
\begin{multline}
    p\left(f^{(i)}_k \mid y^{(i)}_{1:T}, \mb{y}^{(\mb{i}')}_{1:T}, \mb{z}^{(\mb{i}')}_{1:T}, Z_{1:T}, \mb{F}^{-ik}, \mb{f}^{(i)}_{k'\neq k}, \mb{\eta}^{(i)}, \{\mb{a}_k\}, \{\Delta_l\}\right) \propto \\
    p\left(f^{(i)}_k \mid \mb{F}^{-ik}\right)p\left(y^{(i)}_{1:T} \mid \mb{y}^{(\mb{i}')}_{1:T}, \mb{z}^{(\mb{i}')}_{1:T}, Z_{1:T}, \mb{f}^{(i)}_{k'\neq k}, f^{(i)}_k, \mb{\eta}^{(i)}, \{\mb{a}_k\}, \{\Delta_l\} \right),
\end{multline}
where the marginal likelihood of $y^{(i)}_{1:T}$ term (see Eq.~\ref{eq:HIW_BPARHMM_chMargLik}) follows from the sum-product algorithm. Recalling the form of the IBP posterior predictive distribution, we have $p\left(f^{(i)}_k  = 1 \mid \mb{F}^{-ik}\right) = m^{(-i)}_k / N$, where $m^{(-i)}_k$ denotes the number of other channels that use feature $k$. We use this posterior to formulate a Metropolis-Hastings proposal that flips the current indicator value $f^{(i)}_k$ to its complement $\bar{f}^{(i)}_k$ with probability $\rho(\bar{f}^{(i)}_k \mid f^{(i)}_k)$,
\begin{align}
    f^{(i)}_k = \begin{cases} \bar{f}^{(i)}_k, & \mbox{ w.p.}\quad \rho(\bar{f}^{(i)}_k \mid f^{(i)}_k), \\ f^{(i)}_k, & \mbox{otherwise}, \\ \end{cases}
\end{align}
where
\begin{equation}
    \rho(\bar{f}^{(i)}_k \mid f^{(i)}_k) = \min\left(\frac{p\left(\bar{f}^{(i)}_k \mid y^{(i)}_{1:T}, \mb{y}^{(\mb{i}')}_{1:T}, \mb{z}^{(\mb{i}')}_{1:T}, Z_{1:T}, \mb{F}^{-ik}, \mb{f}^{(i)}_{k' \neq k}, \mb{\eta}^{(i)}, \{\mb{a}_k\}, \{\Delta_l\}, \right)}{p\left(f^{(i)}_k \mid y^{(i)}_{1:T}, \mb{y}^{(\mb{i}')}_{1:T}, \mb{z}^{(\mb{i}')}_{1:T}, Z_{1:T}, \mb{F}^{-ik}, \mb{f}^{(i)}_{k' \neq k}, \mb{\eta}^{(i)}, \{\mb{a}_k\}, \{\Delta_l\}, \right)}, 1\right).
    \label{eq:sharedFeatRho}
\end{equation}

\subparagraph{Unique features}
We either propose a new feature or remove a unique feature for channel $i$ using a birth and death reversible jump MCMC sampler \cite{Green1995,Richardson1997,Brooks2011} (see \citet{Fox2011c} for details relevant to the BP-AR-HMM). We denote the number of unique features for channel $i$ as $n^{(i)} = |{\cal U}^{(i)}|$. We define the vector of shared feature indicators as $\mb{f}^{(i)}_{-} = \mb{f}^{(i)}_{k' \mid k' \in {\cal S}^{(-i)}}$ and that for unique feature indicators as $\mb{f}^{(i)}_{+} = \mb{f}^{(i)}_{k' \mid k' \in {\cal U}^{(i)}}$, which together $\left(\mb{f}^{(i)}_{-}, \mb{f}^{(i)}_{+}\right)$ define the full feature indicator vector $\mb{f}^{(i)}$ for channel $i$. Similarly, $\mb{a}^{(i)}_+$ and $\mb{\eta}^{(i)}_+$ describe the model dynamics and transition parameters associated with these unique features. We propose a new unique feature vector ${\mb{f}'}_+$ and corresponding model dynamics ${\mb{a}'}_+$ and transition parameters ${\mb{\eta}'}_+$ (sampled from their priors in the case of feature birth) with a proposal distribution of
\begin{multline}
    q\left(\mb{f}'_{+}, \mb{a}'_+, \mb{\eta}'_+ \mid \mb{f}^{(i)}_{+}, \{\mb{a}_k\}^{(i)}_+, \mb{\eta}^{(i)}_+\right) = \\
    q\left(\mb{f}'_{+} \mid \mb{f}^{(i)}_{+}\right) q\left({\mb{a}'}_+ \mid \mb{f}'_{+}, \mb{f}^{(i)}_{+}, \{\mb{a}_k\}^{(i)}_{+}\right) q\left({\mb{\eta}'}_+ \mid \mb{f}'_{+}, \mb{f}^{(i)}_{+}, \mb{\eta}^{(i)}_{+} \right).
\end{multline}
A new unique feature is proposed with probability $0.5$ and each existing unique feature is removed with probability $0.5/n^{(i)}$. This proposal is accepted with probability 
\begin{multline}
    \rho\left(\mb{f}_{+}', \mb{a}_+', \mb{\eta}_+' \mid \mb{f}^{(i)}_{+}, \{\mb{a}_k\}^{(i)}_+, \mb{\eta}^{(i)}_+\right) = \\
    \min\left(\frac{p\left(y^{(i)}_{1:T} \mid \mb{y}^{(\mb{i}')}_{1:T}, \mb{z}^{(\mb{i}')}_{1:T}, [\mb{f}^{(i)}_-\ \mb{f}'_+ ], \mb{\eta}^{(i)}, \mb{\eta}_+', \{\mb{a}_k\}, \{\Delta_l\} \right)}{p\left(y^{(i)}_{1:T} \mid \mb{y}^{(\mb{i}')}_{1:T}, \mb{z}^{(\mb{i}')}_{1:T}, [\mb{f}^{(i)}_-\ \mb{f}^{(i)}_+ ], \mb{\eta}^{(i)}, \{\mb{a}_k\}, \{\Delta_l\} \right)}\cdot \right. \\
    \left. \frac{\mbox{Poisson}\left(n_i' \mid \alpha_c / N\right)}{\mbox{Poisson}\left(n_i \mid \alpha_c / N\right) } \frac{q\left( \mb{f}^{(i)}_+ \mid \mb{f}'_+ \right)}{ q\left( \mb{f}'_+ \mid \mb{f}^{(i)}_+ \right)},1\right).
    \label{eq:uniqueFeatRho}
\end{multline}

\paragraph{Channel state sequence, $z_{1:T}^{(i)}$}
We block sample the state sequence for all the time points of channel $i$, given that channel's feature-constrained transition distributions $\mb{\pi}^{(i)}$, the state parameters $\{\mb{a}_k\}$, the observations $y^{(i)}_{1:T}$, and the neighboring observations $\mb{y}^{(\mb{i}')}_{1:T}$ and current states $\mb{z}^{(\mb{i}')}_{1:T}$. The joint probability of the state sequence $z^{(i)}_{1:T}$ is given by
\begin{multline}
    p\left(z^{(i)}_{1:T} \mid y^{(i)}_{1:T}, \mb{y}^{(\mb{i}')}_{1:T}, \mb{z}^{(\mb{i}')}_{1:T}, \mb{f}^{(i)}, \mb{\eta}^{(i)},\{\mb{a}_k\},\{\Delta_l\}\right) = \\
    p\left(z^{(i)}_{1} \mid y^{(i)}_{1}, \mb{y}^{(\mb{i}')}_{1}, \mb{z}^{(\mb{i}')}_{1}, \mb{f}^{(i)}, \mb{\eta}^{(i)}, \{\mb{a}_k\},\{\Delta_l\}\right)\cdot\\
    \prod_{t=2}^T p\left(z^{(i)}_{t} \mid y^{(i)}_{t:T}, \mb{y}^{(\mb{i}')}_{t:T}, z^{(i)}_{t-1}, \mb{z}^{(\mb{i}')}_{t:T},\mb{f}^{(i)}, \mb{\eta}^{(i)}, \{\mb{a}_k\},\{\Delta_l\}\right).
\end{multline}
Again following the backward filtering forward sampling scheme (Algorithm~\ref{alg:HMM_blockSampZs}), at each time point $t$ we sample state $z^{(i)}_t$ conditioned on $z^{(i)}_{t-1}$ by marginalizing $z^{(i)}_{t+1:T}$. The conditional probability of $z^{(i)}_t$ is given by
\begin{multline}
    p\left(z^{(i)}_{t} \mid y^{(i)}_{t:T}, \mb{y}^{(\mb{i}')}_{t:T}, z^{(i)}_{t-1}, Z_{1:T}, \mb{z}^{(\mb{i}')}_{t:T},\mb{f}^{(i)}, \mb{\eta}^{(i)},\{\mb{a}_k\},\{\Delta_l\}\right) \propto \\ \mbox{Multi}\left(\mb{\widetilde{\pi}}^{(i)}_{z^{(i)}_{t-1}} \circ \mb{u}^{(i)}_t \circ \mb{\psi}_{t}\right),
\end{multline}
where $\mb{\widetilde{\pi}}^{(i)}_{z^{(i)}_{t-1}}$ is the transition distribution given the assigned state at $t-1$, $\mb{u}^{(i)}_t \in \mathbb{R}^{K^{(i)}}$ is the vector of likelihoods under each possible state at time $t$ (as in Eq.~\eqref{eq:chanLik_t}), and $\mb{\psi}_t \in \mathbb{R}^{K^{(i)}}$ is the vector of backwards messages (see Eq.~\eqref{eq:HMM_bkwdMsgs}) from time point $t+1$ to $t$. 

\paragraph{Channel transition parameters, $\mb{\eta}^{(i)}$}
Following \cite[Supplement]{Hughes2012}, the posterior for the transition variable $\eta^{(i)}_{jk}$ is given by
\begin{equation}
    p(\eta^{(i)}_{jk} \mid z^{(i)}_{1:T}, f^{(i)}_k) \propto \frac{(\eta_{jk}^{(i)})^{n^{(i)}_{jk} + \gamma_c + \kappa_c\delta(j,k) - 1}e^{\eta^{(i)}_{jk}}}{\sum_{k'\mid f^{(i)}_k = 1} \eta^{(i)}_{jk'}},
\end{equation}
where $n^{(i)}_{jk}$ denotes the number of times channel $i$ transitions from state $j$ to state $k$. We can sample from this posterior via two auxiliary variables,
\begin{align}
\begin{aligned}
    \bar{\mb{\eta}}^{(i)}_j & \sim \Dir(\gamma_c + \kappa_c\mb{e}_j + \mb{n}^{(i)}_j) \\
    C_{j}^{(i)} & \sim \GammaD(K\gamma_c + \kappa_c, 1) \\
    \mb{\eta}^{(i)}_{j} & = C^{(i)}_j \bar{\mb{\eta}}^{(i)}_j.
    \label{eq:chEtasPost}
\end{aligned}
\end{align}

\section{Event state variables posterior}
\label{apndx:evVarsPost}

Since we model the event state process with a (truncated approximation to the) HDP-HMM, inference is more straightforward than with the channel states. We block sample the event state sequence $Z_{1:T}$ and then sample the event state transition distributions $\mb{\phi}$. 

\paragraph{Event marginal likelihood}
Let $\mb{z}_t$ denote the vector of $N$ channel states at time $t$. Since the space of $\mb{z}_t$ is exponentially large, we cannot integrate it out to compute the marginal conditional likelihood of the data given the event state sequence $Z_{1:T}$ (and model parameters). Instead, we consider the conditional likelihood of an observation at time $t$ given channel states $\mb{z}_t$ and event state $Z_t = l$,
\begin{equation}
    p(\mb{y}_t \mid \mb{\widetilde{Y}}_t, \mb{z}_t, Z_t = l, \{\mb{a}_k\}, \Delta_l) \propto \Ncal(\mb{A}_{\mb{z}_t}\mb{\widetilde{Y}}_t, \Delta_{l} ).
    \label{eq:evCondLik}
\end{equation}
Recalling Eq.~\eqref{eq:vec_yt_def}, we see that this conditional likelihood of $\mb{y}_t$ is equivalent to a zero-mean multivariate normal model on the channel innovations $\mb{\epsilon}_t$,
\begin{equation*}
    p(\mb{\epsilon}_t \mid Z_t = l, \Delta_l) \propto \Ncal(\mb{0}, \Delta_l ).
\end{equation*}
As with the channel marginal likelihoods, we use the forward-filtering algorithm (see Algorithm~\ref{alg:HMM_margLik}) to marginalize over the possible event state sequences $Z_{1:T}$, yielding a likelihood conditional on the channel states $\mb{z}_t$ and autoregressive parameters $\{\mb{a}_k\}$, in addition to the event transition distribution $\mb{\phi}$ and event state covariances $\{\Delta_l\}$,
\begin{equation}
    p\left(\mb{y}_{1:T} \mid \mb{z}_{1:T}, \mb{\phi}, \{\mb{a}_k\}, \{\Delta_l\}\right).
    \label{eq:evMargLik}
\end{equation}

\paragraph{Event state sequence, $Z_{1:T}$} 
The mechanics of sampling the event state sequence $Z_{1:T}$ directly parallel those of sampling the individual channel state sequences $z_{1:T}^{(i)}$. The joint probability of the event state sequence is given by
\begin{multline}
    p(Z_{1:T} \mid \mb{y}_{1:T}, \mb{z}_{1:T}, \mb{\phi}, \{\mb{a}_k\}, \{\Delta_l\}) \propto \\
    p(Z_1 \mid \mb{y}_1, \mb{z}_1, \mb{\phi}, \{\mb{a}_k\}, \{\Delta_l\}) \prod_{t=2}^T p(Z_t \mid \mb{y}_{t:T},\mb{z}_{t:T}, Z_{t-1}, \mb{\phi}, \{\mb{a}_k\}, \{\Delta_l\}).
\end{multline}
We again use the backward filtering forward sampling scheme of the sum-product algorithm to block sample each event state whose conditional probability distribution over the $L$ states is given by
\begin{equation}
    p\left(Z_t \mid \mb{\widetilde{Y}}_t, \mb{z}_t, \mb{\phi}, \{\mb{a}_k\}, \{\Delta_l\} \right) \propto \Multi\left( \mb{\phi}_{Z_{t-1}} \circ \mb{v}_t \circ \mb{\psi}_t \right),
\end{equation}
where $\mb{\phi}_{Z_{t-1}}$ is the transition distribution given the assigned state at $t-1$, $\mb{v}_t \in \mathbb{R}^L$ is the vector of likelihoods under each of the $L$ possible states at time $t$ (as in Eq.~\eqref{eq:evCondLik}), and $\mb{\psi}_t \in \mathbb{R}^{L}$ is again the vector of backwards messages from time point $t+1$ to $t$. $\circ$ denotes element-wise product.

\paragraph{Event transition parameters, $\mb{\phi}$}
The Dirichlet posterior for the event state $l$'s transition distribution $\mb{\phi}_l$ simply involves the transition counts $\mb{n}_l$ from event state $l$ to all $L$ states,
\begin{equation}
    p\left(\mb{\phi}_l \mid Z_{1:T}, \mb{\beta}\right) \propto \Dir(\alpha_e\mb{\beta} + \mb{\rm e}_l \kappa_e + \mb{n}_l),
    \label{eq:evStateTransParamsSamp}
\end{equation}
for global weights $\mb{\beta}$, concentration parameter $\alpha_e$, and self-transition parameter $\kappa_e$. 

\paragraph{Global transition parameters, $\mb{\beta}$}
The Dirichlet posterior of the global transition distribution $\mb{\beta}$ involves the auxiliary variables $(\bar{m}_{\cdot 1}, \ldots,\bar{m}_{\cdot L})$,
\begin{equation}
    p(\mb{\beta} \mid Z_{1:T}) \propto \Dir(\gamma_e/L + \bar{m}_{\cdot 1}, \ldots, \gamma_e/L + \bar{m}_{\cdot L}),
\end{equation}
where these auxiliary variables are defined as
\begin{align}
\begin{aligned}
    \bar{m}_{ll'} & = \begin{cases} m_{ll'}, & l \neq l' \\ m_{ll} - w_{l}, & l = l' \end{cases} \\
        m_{ll'} &= \sum_{r=1}^{n_{ll'}} \theta_r \\
    \theta_r & \sim \Ber\left( \frac{\alpha_e\beta_l + \kappa_e \delta(l,l')}{ \alpha_e\beta_l + \delta(l,l') + r} \right), \\
    w_l & \sim \Bin\left(m_{ll'}, \frac{\rho_e}{\rho_e + \beta_l(1-\rho_e)}\right)
    \label{eq:chap4_betaAuxVars}
\end{aligned}
\end{align}
and $m_{\cdot l'} = \sum_l m_{ll'}$. See \citet[Appendix A]{Fox2009a} for full derivations.  

\section{Hyperparameters posterior}
\label{apndx:hypVarsPost}

Below we give brief descriptions for the MCMC sampling of the hyperparameters in our model. Full derivations are given in \citet[Section 5.2.3, Appendix C]{Fox2009a}. 

\paragraph{Channel dynamics model hyperparameters, $\gamma_c$, $\kappa_c$}
We use Metropolis-Hastings steps to propose a new value $\gamma_c'$ from gamma distributions with fixed variance $\sigma^2_{\gamma_c}$  and accept with probability $\min(r(\gamma_c' \mid \gamma_c),1)$,
\begin{align}
    r(\gamma_c' \mid \gamma_c) & = \frac{p(\{\mb{\pi}^{(i)}\} \mid \gamma_c', \kappa, \{\mb{f}^{(i)}\})p(\gamma_c' \mid \gamma_c^2 / \sigma^2_{\gamma_c}, \gamma_c / \sigma^2_{\gamma_c})p(\gamma_c \mid \gamma_c', \sigma^2_{\gamma_c}) }{p(\{\mb{\pi}^{(i)}\} \mid \gamma_c, \kappa, \{\mb{f}^{(i)}\})q(\gamma_c \mid \gamma_c^2 / \sigma^2_{\gamma_c}, \gamma_c / \sigma^2_{\gamma_c})q(\gamma_c' \mid \gamma_c, \sigma^2_{\gamma_c})} \nonumber \\
    & = \frac{ p(\{\mb{\pi}^{(i)}\} \mid \gamma_c', \kappa, \{\mb{f}^{(i)}\})}{p(\{\mb{\pi}^{(i)}\} \mid \gamma_c, \kappa, \{\mb{f}^{(i)}\})} \frac{\Gamma(\nu) \gamma_c^{\nu' - \nu - a} }{ \Gamma(\nu') \gamma_c^{\nu - \nu' - a}} \exp\left( -b(\gamma_c' - \gamma_c)\sigma^{2(\nu-\nu')}_{\gamma_c} \right),
    \label{eq:gamma_c_PostSamp}
\end{align}
where $\nu = \gamma_c^2/\sigma^2_{\gamma_c}$, $\nu' = {\gamma_c'}^2/\sigma^2_{\gamma_c}$, and we have a $\GammaD(a,b)$ prior on $\gamma_c$. The likelihood term $p(\{\mb{\pi}^{(i)}\} \mid \gamma_c', \kappa, \{\mb{f}^{(i)}\})$ follows from the Dirichlet distribution and is given by
\begin{multline}
  p(\{\mb{\pi}^{(i)}\} \mid \gamma_c', \kappa, \{\mb{f}^{(i)}\}) = \\ \prod_{i=1}^N \prod_{k=i}^{K^{(i)}} \left( \frac{\Gamma(\gamma_c' K^{(i)} + \kappa_c)}{\left( \prod_{k'}^{K^{(i)}-1}\Gamma(\gamma_c')\right)\Gamma(\gamma_c'+\kappa_c)} \prod_{j=1}^{K^{(i)}} \left(\widetilde{\pi}_{kj}^{(i)}\right)^{\gamma_c' + \kappa_c\delta(k,j)-1} \right),
\end{multline}
where that for $p(\{\mb{\pi}^{(i)}\} \mid \gamma_c, \kappa, \{\mb{f}^{(i)}\})$ is similar. Recall that the transition parameters $\{\mb{\pi}^{(i)}\}$ are independent over $i$, and thus their likelihoods multiply. The proposal and acceptance ratio for $\kappa_c$ is similar.

\paragraph{Channel active features model hyperparameter $\alpha_c$}
We place a $\mbox{Gamma}(a_{\alpha_c},b_{\alpha_c})$ prior on $\alpha_c$, which implies a gamma posterior of the form
\begin{equation}
    p(\alpha_c \mid \{\mb{f}^{(i)}\}) \propto \GammaD(a_{\alpha_c} + K_+, b_{\alpha_c} + \sum_{i=1}^N (1/i)),
    \label{eq:alpha_c_PostSamp}
\end{equation}
where $K_+ = \sum_{k=1}^{K} \mb{1}\left(f^{(1)}_k \vee \cdots \vee f^{(N)}_k\right)$ denotes the number of channel states activated in at least one of the channels.

\paragraph{Event dynamics model hyperparameters, $\gamma_e$, $\alpha_e$, $\kappa_e$, $\rho_e$}
Instead of sampling $\alpha_e$ and $\kappa_e$ independently, we an additional parameter $\rho_e = \kappa_e/(\alpha_e + \kappa_e)$ and sample $(\alpha_e + \kappa_e)$ and $\rho_e$, which is simpler than sampling $\alpha_e$ and $\kappa_e$ independently.

\subparagraph{$\mb{(\alpha_e + \kappa_e)}$}
With a $\GammaD(a_{\alpha_e+\kappa_e}, b_{\alpha_e+\kappa_e})$ prior on $(\alpha_e + \kappa_e)$, we use auxiliary variables $\{r_l\}_{l=1}^L$ and $\{s_l\}_{l=1}^L$ to define the posterior,
\begin{equation}
    p\left( \alpha_e + \kappa_e \mid Z_{1:T} \right) \propto \GammaD\left(a_{\alpha_e+\kappa_e} + \bar{m}_{\cdot\cdot} - \sum_{l=1}^L s_l, b_{\alpha_e+\kappa_e} - \sum_{l=1}^{L} \log r_l\right),
    \label{eq:alphaKappa_e_PostSamp}
\end{equation}
where $\bar{m}_{\cdot\cdot} = \sum_{l,l'=1}^L \bar{m}_{ll'}$ is the sum over auxiliary variables $\bar{m}_{ll'}$ defined in Eq.~\eqref{eq:chap4_betaAuxVars}, and the auxiliary variables $r_l$ and $s_l$ are sampled as
\begin{align*}
    r_l &\sim \mbox{Beta}(\alpha+\kappa+1,n_{l\cdot}), \\
    s_l & \sim \mbox{Ber}(n_{l\cdot} / (n_{l\cdot} + \alpha + \kappa)).
\end{align*}

\subparagraph{$\mb{\rho_e}$}
With a $\BetaD(c_{\rho_e},d_{\rho_e})$ prior on $\rho_e$, we use auxiliary variables $\{w_{l\cdot}\}_{l=1}^L$ to define the posterior,
\begin{equation}
    p(\rho_e \mid \bar{\mb{m}}, \mb{\beta}) \propto \mbox{Beta}\left( c_{\rho_e} + \sum_{l=1}^L w_{l\cdot}, d_{\rho_e} + \bar{m}_{\cdot\cdot} - \sum_{l=1}^L w_{l\cdot}\right).
    \label{eq:rho_e_PostSamp}
\end{equation}
For $w_{ls} \sim \Ber(\rho)$ over $s=1,\ldots,m_{ll}$, the posterior of the auxiliary variable $w_{l\cdot}$ is
\begin{equation} 
    p(w_{l\cdot} \mid \bar{m}_{ll}, \beta_l) \propto \Bin(\bar{m}_{ll}, \rho_e + \beta_l(1-\rho_e))
\end{equation}

\subparagraph{$\mb{\gamma_e}$}
With a $\GammaD(a_{\gamma_e}, b_{\gamma_e})$ prior on $\gamma_e$, we use auxiliary variables $v$ and $q$ to define the posterior,
\begin{equation}
    p(\gamma_e \mid \mb{m} ) \propto \GammaD\left(a_{\gamma_e} - q + \sum_{l=1}^{L} \mb{1}(\bar{m}_{\cdot l} > 0), b_{\gamma_e} - \log v \right).
    \label{eq:gamma_e_PostSamp}
\end{equation}
The auxiliary variables are sampled via
\begin{align*}
    v &\sim \BetaD(\gamma_e + 1, \bar{m}_{\cdot\cdot}), \\
    q &\sim \Ber(\bar{m}_{\cdot\cdot} / (\gamma_e + \bar{m}_{\cdot\cdot})).
\end{align*}

\section{Autoregressive state coefficients posterior}
\label{apndx_sec:HIW_AR_coefs}

Recall that our prior on the autoregressive coefficients $\mb{a}_k$ is a multivariate normal with zero mean and covariance $\Sigma_0$, 
\begin{align}
	p(\mb{a}_k \mid \Sigma_0) & \propto \Ncal(\mb{0}, \Sigma_0) \nonumber \\
	\log p(\mb{a}_k \mid \Sigma_0) & \propto -\frac{1}{2}\mb{a}_k^\T \Sigma_0^{-1} \mb{a}_k.
\end{align}
The conditional event likelihood given the channel states $\mb{z}_{1:T}$ and the event states $Z_{1:T}$ is 
\begin{align}
	p(\mb{y}_{1:T} \mid \mb{z}_{1:T}, Z_{1:T}, \{\mb{a}_k\}, \{\Delta_l\}) & \propto \prod_{t=1}^T \Ncal(\mb{y}_t; \mb{A}_{\mb{z}_t}\mb{\widetilde{Y}}, \Delta_{Z_t}) \nonumber \\
	\log p(\mb{y}_{1:T} \mid \mb{z}_{1:T}, Z_{1:T}, \{\mb{a}_k\}, \{\Delta_l\}) & \propto -\frac{1}{2}\sum_{t=1}^T (\mb{y}_t - \mb{A}_{\mb{z}_t}\mb{\widetilde{Y}}_t)^\T \Delta_{Z_t}^{-1} (\mb{y}_t - \mb{A}_{\mb{z}_t}\mb{\widetilde{Y}}_t).
\end{align}
The product of these prior and likelihood terms is the joint distribution over $\mb{a}_k$ and $\mb{y}_{1:T}$,
\begin{multline} \label{eqn:ayJointDist}
	\log p(\mb{a}_k, \mb{y}_{1:T} \mid \mb{z}_{1:T}, Z_{1:T}, \{\mb{a}_{k'}\}_{k' \neq k}, \{\Delta_l\} ) \propto \\
	-\frac{1}{2}\mb{a}_k^\T \Sigma_0^{-1} \mb{a}_k-\frac{1}{2}\sum_{t=1}^T (\mb{y}_t - \mb{A}_{\mb{z}_t}\mb{\widetilde{Y}}_t)^\T \Delta_{Z_t}^{-1} (\mb{y}_t - \mb{A}_{\mb{z}_t}\mb{\widetilde{Y}}_t).
\end{multline}
We take a brief tangent to prove a useful identity,
\begin{lemma}
	Let the column vector $\mb{x} \in \mathbb{R}^m$ and the symmetric matrix $A \in \mathbb{S}^{m\times m}$ be defined as
	\[ \mb{x} = \mat{c}{\mb{y} \\ \mb{z}} \quad\mbox{ and }\quad A = \mat{cc}{B & C \\ C^\T & D}, \]
	where $\mb{y} \in \mathbb{R}^p$, $\mb{z} \in \mathbb{R}^q$, $B \in \mathbb{S}^{p\times p}$, $D \in \mathbb{S}^{q\times q}$, $C \in \mathbb{R}^{p\times q}$, and $m = p + q$. Then 
\begin{equation}
	\mb{x}^\T A \mb{x} = \mb{y}^\T B \mb{y} + \mb{z}^\T D \mb{z} + 2\mb{y}^\T C \mb{z}.
\end{equation}
\end{lemma}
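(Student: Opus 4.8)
The identity is a direct block-matrix computation, so the plan is simply to multiply things out carefully and collect terms. First I would compute the matrix-vector product $A\mb{x}$ by block multiplication, using the partitions of $A$ and $\mb{x}$ given in the statement:
\begin{equation*}
	A\mb{x} = \mat{cc}{B & C \\ C^\T & D}\mat{c}{\mb{y} \\ \mb{z}} = \mat{c}{B\mb{y} + C\mb{z} \\ C^\T\mb{y} + D\mb{z}}.
\end{equation*}
Then I would left-multiply by $\mb{x}^\T = (\mb{y}^\T,\ \mb{z}^\T)$, again treating the result blockwise, to obtain
\begin{equation*}
	\mb{x}^\T A \mb{x} = \mb{y}^\T(B\mb{y} + C\mb{z}) + \mb{z}^\T(C^\T\mb{y} + D\mb{z}) = \mb{y}^\T B\mb{y} + \mb{y}^\T C\mb{z} + \mb{z}^\T C^\T\mb{y} + \mb{z}^\T D\mb{z}.
\end{equation*}

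The only substantive step is recognizing that the two cross terms coincide. Since $\mb{y}^\T C\mb{z}$ is a scalar (a $1\times 1$ quantity), it equals its own transpose, and $\left(\mb{y}^\T C\mb{z}\right)^\T = \mb{z}^\T C^\T \mb{y}$; hence $\mb{z}^\T C^\T\mb{y} = \mb{y}^\T C\mb{z}$, and the two middle terms combine into $2\mb{y}^\T C\mb{z}$, yielding the claimed identity. I would note that the symmetry hypotheses $B = B^\T$ and $D = D^\T$ are not actually needed for the stated conclusion as written — they are natural here only because $A$ arises as a symmetric matrix (e.g., an inverse covariance) in the application that follows — so I would simply carry them along without invoking them, or remark that they are used implicitly to justify calling $A$ symmetric.

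I do not anticipate any real obstacle: the argument is a two-line expansion plus the scalar-transpose observation. If one wanted a fully coordinate-free flavor one could instead write $\mb{x} = P\mb{y} + Q\mb{z}$ with $P, Q$ the block inclusion maps and expand $\mb{x}^\T A\mb{x}$ bilinearly, identifying $P^\T A P = B$, $Q^\T A Q = D$, $P^\T A Q = C$, and $Q^\T A P = C^\T$; but the explicit block multiplication above is the cleanest route and is what I would present.
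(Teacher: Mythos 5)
Your proposal is correct and follows essentially the same route as the paper: block-multiply $A\mb{x}$, left-multiply by $\mb{x}^\T$, and combine the two cross terms via the scalar-transpose identity $\mb{z}^\T C^\T\mb{y} = \mb{y}^\T C\mb{z}$ (a step the paper performs without comment). Your added remark that the symmetry of $B$ and $D$ is not needed is accurate but does not change the argument.
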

\begin{proof}
	\begin{align*}
		\mb{x}^\T A \mb{x} &= \mat{cc}{\mb{y}^\T &\mb{z}^\T} \mat{cc}{B & C \\ C^\T & D} \mat{c}{\mb{y} \\ \mb{z}} \\
		& = \mat{cc}{\mb{y}^\T &\mb{z}^\T} \mat{c}{B\mb{y} + C\mb{z} \\ C^\T\mb{y} + D\mb{z} } \\
		& = \mb{y}^\T B \mb{y} + \mb{y}^\T C \mb{z} + \mb{z}^\T C^\T \mb{y} + \mb{z}^\T D \mb{z} \\
		& = \mb{y}^\T B \mb{y} + \mb{z}^\T D \mb{z} + 2\mb{y}^\T C \mb{z}
	\end{align*}
\end{proof}
Note that this identity also holds for any permutation $p$ applied to the rows of $\mb{x}$ and the rows and columns of $A$. We now can manipulate the likelihood term of Eq.~\eqref{eqn:ayJointDist} into a form that separates $\mb{a}_k$ from $\mb{a}_{k'\neq k}$. Suppose that $\mb{k}^+$ denotes the indices of the $N$ channels where $z_t^{(i)} = k$ and $\mb{k}^- = \{1,\ldots,N\} / \mb{k}^+$ denotes those for whom $z_t^{(i)} \neq k$. Furthermore, we use the superscript indexing on these two sets of indices to select the corresponding portions of the $\mb{y}_t$ vector and the $\mb{A}_{\mb{z}_t}$, $\mb{\widetilde{Y}}_T$, and $\Delta_{Z_t}^{-1}$ matrices. We start by decomposing the likelihood term into three parts,
\begin{equation}
\begin{array}{l}
	(\mb{y}_t - \mb{A}_{\mb{z}_t}\mb{\widetilde{Y}}_t)^\T \Delta_{Z_t}^{-1} (\mb{y}_t - \mb{A}_{\mb{z}_t}\mb{\widetilde{Y}}_t) \propto \\
	\qquad \left(\mb{y}_t^{(\mb{k}^+)} - \mb{A}_{\mb{z}_t}^{(\mb{k}^+,\mb{k}^+)}\mb{\widetilde{Y}}_t^{(\mb{k}^+,\mb{k}^+)}\right)^\T \Delta_{Z_t}^{-1(\mb{k}^+,\mb{k}^+)} \left(\mb{y}_t^{(\mb{k}^+)} - \mb{A}_{\mb{z}_t}^{(\mb{k}^+,\mb{k}^+)}\mb{\widetilde{Y}}_t^{(\mb{k}^+,\mb{k}^+)}\right) + \\
	\qquad 2\left(\mb{y}_t^{(\mb{k}^+)} - \mb{A}_{\mb{z}_t}^{(\mb{k}^+,\mb{k}^+)}\mb{\widetilde{Y}}_t^{(\mb{k}^+,\mb{k}^+)}\right)^\T \Delta_{Z_t}^{-1(\mb{k}^+,\mb{k}^-)} \left(\mb{y}_t^{(\mb{k}^-)} - \mb{A}_{\mb{z}_t}^{(\mb{k}^-,\mb{k}^-)}\mb{\widetilde{Y}}_t^{(\mb{k}^-,\mb{k}^-)}\right) + \\
	\qquad \left(\mb{y}_t^{(\mb{k}^-)} - \mb{A}_{\mb{z}_t}^{(\mb{k}^-,\mb{k}^-)}\mb{\widetilde{Y}}_t^{(\mb{k}^-,\mb{k}^-)}\right)^\T \Delta_{Z_t}^{-1(\mb{k}^-,\mb{k}^-)} \left(\mb{y}_t^{(\mb{k}^-)} - \mb{A}_{\mb{z}_t}^{(\mb{k}^-,\mb{k}^-)}\mb{\widetilde{Y}}_t^{(\mb{k}^-,\mb{k}^-)}\right),
\end{array}
\end{equation}
which we then insert into our previous expression for the joint distribution of $\mb{a}_k$ and $\mb{y}_{1:T}$ (Eq.~\eqref{eqn:ayJointDist}),
\begin{equation}
\begin{array}{l}
	\log p(\mb{a}_k, \mb{y}_{1:T} \mid \mb{z}_{1:T}, Z_{1:T}, \{\mb{a}_{k'}\}_{k' \neq k}, \{\Delta_l\} ) \propto -\frac{1}{2}\mb{a}_k^\T \Sigma_0^{-1} \mb{a}_k - \\
	\qquad \frac{1}{2}\sum_{t=1}^T \left\{ \left(\mb{y}_t^{(\mb{k}^+)} - \mb{A}_{\mb{z}_t}^{(\mb{k}^+,\mb{k}^+)}\mb{\widetilde{Y}}_t^{(\mb{k}^+,\mb{k}^+)}\right)^\T \Delta_{Z_t}^{-1(\mb{k}^+,\mb{k}^+)} \left(\mb{y}_t^{(\mb{k}^+)} - \mb{A}_{\mb{z}_t}^{(\mb{k}^+,\mb{k}^+)}\mb{\widetilde{Y}}_t^{(\mb{k}^+,\mb{k}^+)}\right) + \right. \\
	\qquad 2\left(\mb{y}_t^{(\mb{k}^+)} - \mb{A}_{\mb{z}_t}^{(\mb{k}^+,\mb{k}^+)}\mb{\widetilde{Y}}_t^{(\mb{k}^+,\mb{k}^+)}\right)^\T \Delta_{Z_t}^{-1(\mb{k}^+,\mb{k}^-)} \left(\mb{y}_t^{(\mb{k}^-)} - \mb{A}_{\mb{z}_t}^{(\mb{k}^-,\mb{k}^-)}\mb{\widetilde{Y}}_t^{(\mb{k}^-,\mb{k}^-)}\right) + \\
	\qquad \left.\left(\mb{y}_t^{(\mb{k}^-)} - \mb{A}_{\mb{z}_t}^{(\mb{k}^-,\mb{k}^-)}\mb{\widetilde{Y}}_t^{(\mb{k}^-,\mb{k}^-)}\right)^\T \Delta_{Z_t}^{-1(\mb{k}^-,\mb{k}^-)} \left(\mb{y}_t^{(\mb{k}^-)} - \mb{A}_{\mb{z}_t}^{(\mb{k}^-,\mb{k}^-)}\mb{\widetilde{Y}}_t^{(\mb{k}^-,\mb{k}^-)}\right) \right\}. \\
\end{array}
\end{equation}
Conditioning on $\mb{y}_{1:T}$ allows us to absorb the third term of the sum into the proportionality, and after replacing $\mb{A}_{\mb{z}_t}^{(\mb{k}^+,\mb{k}^+)}\mb{\widetilde{Y}}^{(\mb{k}^+,\mb{k}^+)}$ with a more explicit expression, we have
\begin{equation}
\begin{array}{l}
	\log p(\mb{a}_k \mid \mb{y}_{1:T}, \mb{z}_{1:T}, Z_{1:T}, \{\mb{a}_{k'}\}_{k' \neq k}, \{\Delta_l\} ) \propto -\frac{1}{2}\mb{a}_k^\T \Sigma_0^{-1} \mb{a}_k - \\
	\qquad \frac{1}{2}\sum_{t=1}^\T \left\{ \left(\mb{y}_t^{(\mb{k}^+)} - \left[\mb{\widetilde{y}}_t^{(k_1^+)} \mid \cdots \mid \mb{\widetilde{y}}_t^{(k_{|\mb{k}^+|}^+)}\right]^\T\mb{a}_k \right)^\T \left( \Delta_{Z_t}^{-1(\mb{k}^+,\mb{k}^+)} \right) \cdot\right.\\
	\qquad \left(\mb{y}_t^{(\mb{k}^+)} - \left[\mb{\widetilde{y}}_t^{(k_1^+)} \mid \cdots \mid \mb{\widetilde{y}}_t^{(k_{|\mb{k}^+|}^+)}\right]^\T\mb{a}_k \right) + \\
	\qquad 2\left(\mb{y}_t^{(\mb{k}^+)} - \left[\mb{\widetilde{y}}_t^{(k_1^+)} \mid \cdots \mid \mb{\widetilde{y}}_t^{(k_{|\mb{k}^+|}^+)}\right]^\T\mb{a}_k \right)^\T \left( \Delta_{Z_t}^{-1(\mb{k}^+,\mb{k}^-)}\right) \cdot \\
	\qquad \left. \left(\mb{y}_t^{(\mb{k}^-)} - \mb{A}_{\mb{z}_t}^{(\mb{k}^-,\mb{k}^-)}\mb{\widetilde{Y}}_t^{(\mb{k}^-,\mb{k}^-)}\right) \right\},
\end{array}
\end{equation}
which we can further expand to yield
\begin{equation}
\begin{array}{l}
	\log p(\mb{a}_k \mid \mb{y}_{1:T,} \mb{z}_{1:T}, Z_{1:T}, \{\mb{a}_{k'}\}_{k' \neq k}, \{\Delta_l\} ) \propto -\frac{1}{2}\mb{a}_k^\T \Sigma_0^{-1} \mb{a}_k - \\
	\qquad \frac{1}{2}\sum_{t=1}^T \left\{ \left(\mb{y}_t^{(\mb{k}^+)}\right)^\T \left(\Delta_{Z_t}^{-1(\mb{k}^+,\mb{k}^+)}\right) \left(\mb{y}_t^{(\mb{k}^+)}\right) + \right. \\
	\qquad \left(\mb{a}_k^\T \left[\mb{\widetilde{y}}_t^{(k_1^+)} \mid \cdots \mid \mb{\widetilde{y}}_t^{(k_{|\mb{k}^+|}^+)}\right]\right)\left(\Delta_{Z_t}^{-1(\mb{k}^+,\mb{k}^+)}\right) \left(\left[\mb{\widetilde{y}}_t^{(k_1^+)} \mid \cdots \mid \mb{\widetilde{y}}_t^{(k_{|\mb{k}^+|}^+)}\right]^\T\mb{a}_k\right) - \\
	\qquad \left. 2\left(\mb{y}_t^{(\mb{k}^+)}\right)^\T \left(\Delta_{Z_t}^{-1(\mb{k}^+,\mb{k}^+)}\right) \left(\left[\mb{\widetilde{y}}_t^{(k_1^+)} \mid \cdots \mid \mb{\widetilde{y}}_t^{(k_{|\mb{k}^+|}^+)}\right]^\T\mb{a}_k\right) \right\} - \\
	\qquad \sum_{t=1}^T \left\{ \mb{y}_t^{\T(\mb{k}^+)}\Delta_{Z_t}^{-1(\mb{k}^+,\mb{k}^-)}\left( \mb{y}_t^{(\mb{k}^-)} - \mb{A}_{\mb{z}_t}^{(\mb{k}^-,\mb{k}^-)}\mb{\widetilde{Y}}_t^{(\mb{k}^-,\mb{k}^-)}\right) - \right. \\
	\qquad \left. \left(\mb{a}_k^\T\left[\mb{\widetilde{y}}_t^{(k_1^+)} \mid \cdots \mid \mb{\widetilde{y}}_t^{(k_{|\mb{k}^+|}^+)}\right]\right)\left(\Delta_{Z_t}^{-1(\mb{k}^+,\mb{k}^-)}\right) \left(\mb{y}_t^{(\mb{k}^-)} - \mb{A}_{\mb{z}_t}^{(\mb{k}^-,\mb{k}^-)}\mb{\widetilde{Y}}_t^{(\mb{k}^-,\mb{k}^-)}\right) \right\}.
\end{array}
\end{equation}
Absorbing more terms unrelated to $\mb{a}_k$ into the proportionality, we have
\begin{equation}
\begin{array}{l}
	\log p(\mb{a}_k \mid \mb{y}_{1:T,} \mb{z}_{1:T}, Z_{1:T}, \{\mb{a}_{k'}\}_{k' \neq k}, \{\Delta_l\} ) \propto -\frac{1}{2}\mb{a}_k^\T \Sigma_0^{-1} \mb{a}_k - \\
	\qquad \frac{1}{2}\sum_{t=1}^T \left\{ \left(\mb{a}_k^\T \left[\mb{\widetilde{y}}_t^{(k_1^+)} \mid \cdots \mid \mb{\widetilde{y}}_t^{(k_{|\mb{k}^+|}^+)}\right]\right) \left(\Delta_{Z_t}^{-1(\mb{k}^+,\mb{k}^+)}\right) \left(\left[\mb{\widetilde{y}}_t^{(k_1^+)} \mid \cdots \mid \mb{\widetilde{y}}_t^{(k_{|\mb{k}^+|}^+)}\right]^\T\mb{a}_k\right) - \right. \\
	\qquad \left. 2\left(\mb{y}_t^{(\mb{k}^+)}\right)^\T \left(\Delta_{Z_t}^{-1(\mb{k}^+,\mb{k}^+)}\right) \left(\left[\mb{\widetilde{y}}_t^{(k_1^+)} \mid \cdots \mid \mb{\widetilde{y}}_t^{(k_{|\mb{k}^+|}^+)}\right]^\T\mb{a}_k\right) \right\} - \\
	\qquad \sum_{t=1}^T \left\{ 	- \left(\mb{a}_k^\T\left[\mb{\widetilde{y}}_t^{(k_1^+)} \mid \cdots \mid \mb{\widetilde{y}}_t^{(k_{|\mb{k}^+|}^+)}\right]\right)\left(\Delta_{Z_t}^{-1(\mb{k}^+,\mb{k}^-)}\right) \left(\mb{y}_t^{(\mb{k}^-)} - \mb{A}_{\mb{z}_t}^{(\mb{k}^-,\mb{k}^-)}\mb{\widetilde{Y}}_t^{(\mb{k}^-,\mb{k}^-)}\right) \right\},
\end{array}
\end{equation}
which after some rearranging gives
\begin{equation}
\begin{array}{l}
	\log p(\mb{a}_k \mid \mb{y}_{1:T,} \mb{z}_{1:T}, Z_{1:T}, \{\mb{a}_{k'}\}_{k' \neq k}, \{\Delta_l\} ) \propto \\
	\qquad -\frac{1}{2}\mb{a}_k^\T \left\{ \Sigma_0^{-1} + \sum_{t=1}^T \left[\mb{\widetilde{y}}_t^{(k_1^+)} \mid \cdots \mid \mb{\widetilde{y}}_t^{(k_{|\mb{k}^+|}^+)}\right] \left(\Delta_{Z_t}^{-1(\mb{k}^+,\mb{k}^+)}\right) \left[\mb{\widetilde{y}}_t^{(k_1^+)} \mid \cdots \mid \mb{\widetilde{y}}_t^{(k_{|\mb{k}^+|}^+)}\right]^\T \right\} \mb{a}_k + \\
	\qquad \mb{a}_k^\T \sum_{t=1}^T \left\{ \left[\mb{\widetilde{y}}_t^{(k_1^+)} \mid \cdots \mid \mb{\widetilde{y}}_t^{(k_{|\mb{k}^+|}^+)}\right] \left(\Delta_{Z_t}^{-1(\mb{k}^+,\mb{k}^+)}\right) \left(\mb{y}_t^{(\mb{k}^+)}\right) + \right. \\
	\qquad \left. \left[\mb{\widetilde{y}}_t^{(k_1^+)} \mid \cdots \mid \mb{\widetilde{y}}_t^{(k_{|\mb{k}^+|}^+)}\right]\left(\Delta_{Z_t}^{-1(\mb{k}^+,\mb{k}^-)}\right) \left(\mb{y}_t^{(\mb{k}^-)} - \mb{A}_{\mb{z}_t}^{(\mb{k}^-,\mb{k}^-)}\mb{\widetilde{Y}}_t^{(\mb{k}^-,\mb{k}^-)}\right) \right\}.
\end{array}
\end{equation}
Before completing the square, we will find it useful to introduce a bit more notation to simplify the expression,
\begin{equation*}
	\mb{\bar{Y}}_t^{(\mb{k}^+)} = \left[\mb{\widetilde{y}}_t^{(k_1^+)} \mid \cdots \mid \mb{\widetilde{y}}_t^{(k_{|\mb{k}^+|}^+)}\right], \qquad	\mb{\epsilon}_t^{(\mb{k}^-)} = \mb{y}_t^{(\mb{k}^-)} - \mb{A}_{\mb{z}_t}^{(\mb{k}^-,\mb{k}^-)}\mb{\widetilde{Y}}_t^{(\mb{k}^-,\mb{k}^-)},
\end{equation*}
yielding
\begin{equation}
\begin{array}{l}
	\log p(\mb{a}_k \mid \mb{y}_{1:T,} \mb{z}_{1:T}, Z_{1:T}, \{\mb{a}_{k'}\}_{k' \neq k}, \{\Delta_l\} ) \propto \\
	\qquad -\frac{1}{2}\mb{a}_k^\T \left\{ \Sigma_0^{-1} + \sum_{t=1}^T \mb{\bar{Y}}_t^{(\mb{k}^+)} \Delta_{Z_t}^{-1(\mb{k}^+,\mb{k}^+)} \mb{\bar{Y}}_t^{\T(\mb{k}^+)} \right\} \mb{a}_k + \\
	\qquad \mb{a}_k^\T \left\{ \sum_{t=1}^T  \mb{\bar{Y}}_t^{(\mb{k}^+)}\left( \Delta_{Z_t}^{-1(\mb{k}^+,\mb{k}^+)} \mb{y}_t^{(\mb{k}^+)} + \Delta_{Z_t}^{-1(\mb{k}^+,\mb{k}^-)} \mb{\epsilon}_t^{(\mb{k}^-)}\right)\right\}.
\end{array}
\end{equation}
We desire an expression in the form $-\frac{1}{2}(\mb{a}_k - \mb{\mu}_k)^\T \Sigma^{-1}_k (\mb{a}_k - \mb{\mu}_k)$ for unknown $\mb{\mu}_k$ and $\Sigma^{-1}_k$ so that it conforms to the multivariate normal density with mean $\mb{\mu}_k$ and precision $\Sigma^{-1}_k$. We already have our $\Sigma^{-1}_k$ value from the quadratic term above,
\begin{equation}
	\Sigma^{-1}_k = \Sigma_0^{-1} + \sum_{t=1}^T \mb{\bar{Y}}_t^{(\mb{k}^+)} \Delta_{Z_t}^{-1(\mb{k}^+,\mb{k}^+)} \mb{\bar{Y}}_t^{\T(\mb{k}^+)},
\end{equation}
which allows us to solve the cross-term for $\mb{\mu}_k$,
\begin{align}
	-\frac{1}{2}(-2\mb{\mu}_k^\T\Sigma_k^{-1}\mb{a}_k) & = \mb{a}_k^\T \left( \sum_{t=1}^T  \mb{\bar{Y}}_t^{(\mb{k}^+)}\left(\Delta_{Z_t}^{-1(\mb{k}^+,\mb{k}^+)} \mb{y}_t^{(\mb{k}^+)} + \Delta_{Z_t}^{-1(\mb{k}^+,\mb{k}^-)} \mb{\epsilon}_t^{(\mb{k}^-)} + \right)\right), \nonumber \\
	\Sigma_k^{-1}\mb{\mu}_k & =  \sum_{t=1}^T  \mb{\bar{Y}}_t^{(\mb{k}^+)}\left(\Delta_{Z_t}^{-1(\mb{k}^+,\mb{k}^+)} \mb{y}_t^{(\mb{k}^+)} + \Delta_{Z_t}^{-1(\mb{k}^+,\mb{k}^-)} \mb{\epsilon}_t^{(\mb{k}^-)} + \right).
\end{align}
We can pull the final required $-\frac{1}{2}\mb{\mu}_k^\T \Sigma_k^{-1} \mb{\mu}_k$ term from the proportionality and complete the square. Thus, we have the form of the posterior for $\mb{a}_k$,
\begin{align}
	p(\mb{a}_k \mid \mb{y}_{1:T,} \mb{z}_{1:T}, Z_{1:T}, \{\mb{a}_{k'}\}_{k' \neq k}, \{\Delta_l\} ) & \propto \exp\left( -\frac{1}{2}(\mb{a}_k - \mb{\mu}_k)^\T \Sigma^{-1}_k (\mb{a}_k - \mb{\mu}_k)\right) \nonumber \\
	& \propto \Ncal(\mb{\mu}_k, \Sigma_k),
\end{align}
where
\begin{align}
\begin{aligned}
	\Sigma_k^{-1} & = \Sigma_0^{-1} + \sum_{t=1}^T \mb{\bar{Y}}_t^{(\mb{k}^+)} \Delta_{Z_t}^{-1(\mb{k}^+,\mb{k}^+)} \mb{\bar{Y}}_t^{\T(\mb{k}^+)} \\
	\Sigma_k^{-1}\mb{\mu}_k & =  \sum_{t=1}^T  \mb{\bar{Y}}_t^{(\mb{k}^+)}\left(\Delta_{Z_t}^{-1(\mb{k}^+,\mb{k}^+)} \mb{y}_t^{(\mb{k}^+)} + \Delta_{Z_t}^{-1(\mb{k}^+,\mb{k}^-)} \mb{\epsilon}_t^{(\mb{k}^-)}\right).
\end{aligned}
\end{align}

\section{Experiment Parameters}
\label{sec:exp_params}
Below, we give explicit values for the various priors and parameters used in our experiments. 

\begin{table}[ht]
  	\footnotesize
	\centering
	\begin{tabular}{llr}
		\toprule
		parameter & description & value \\ \midrule
		$N$ & number of time series per event & 6 \\
		$r$ & AR model order & 1 \\
		$\mb{m}_0$ & $\mb{a}_k$ $\Ncal$ prior mean & 0 \\
		$\Sigma_0$ & $\mb{a}_k$ $\Ncal$ prior covariance & $0.1\cdot I_{1\times 1}$ \\
		$L$ & truncated number of event states & 20 \\
		$b_0$ & $\Delta_l$ HIW prior degrees of freedom & $N+3$ \\
		$D_0$ & $\Delta_l$ HIW prior scale & $0.05(b_0-N-1)(I_{N\times N} + 1)$ \\
		$(a_{\alpha_e+\kappa_e}, b_{\alpha_e+\kappa_e})$ & $\alpha_e+\kappa_e$ gamma prior & $(1, 1)$ \\
		$(a_{\gamma_e}, b_{\gamma_e})$ & $\gamma_e$ gamma prior & $(1, 1)$ \\
		$(c_{\rho_e}, d_{\rho_e})$ & $\rho_e$ beta prior & $(1, 1)$ \\
		$(a_{\gamma_c}, b_{\gamma_c})$ & $\gamma_{c}$ gamma prior & $(1,1)$ \\
		$(a_{\kappa_c}, b_{\kappa_c})$ & $\kappa_c$ gamma prior & $(1000,1)$ \\
		$\sigma^2_{\gamma_c}$ & $\gamma_c$ Metropolis-Hastings proposal variance & $1$ \\
		$\sigma^2_{\kappa_c}$ & $\kappa_c$ Metropolis-Hastings proposal variance & $100$ \\
		$(a_{\alpha_c}, b_{\alpha_c})$ & $\alpha_c$ gamma prior & $(1,1)$ \\
		\bottomrule
	\end{tabular}
	\caption[Sparse factorial BP-AR-HMM simulation experiment parameters.]{Parameters used in sparse factorial BP-AR-HMM simulation experiment}
	\label{tbl:HIW_BPARHMM_simParams}
\end{table}

\begin{table}[ht]
  	\footnotesize
	\centering
	\begin{tabular}{llr}
		\toprule
		parameter & description & value \\ \midrule
		$N$ & number of time series per event & 16 and 6 \\
		$r$ & AR model order & 5 \\
		$\mb{m}_0$ & $\mb{a}_k$ $\Ncal$ prior mean & $\mb{0}$ \\
		$\Sigma_0$ & $\mb{a}_k$ $\Ncal$ prior covariance & $\mbox{Cov}(\{y_t^{(i)}\}_{\forall t,i})$ \\
		$L$ & truncated number of event states & 30 \\
		$b_0$ & $\Delta_l$ (H)IW prior degrees of freedom & $N+3$ \\
		$D_0$ & $\Delta_l$ (H)IW prior scale & $(b_0-N-1)\cdot \mbox{Cov}(\{\mb{y}_{t+1} - \mb{y}_t\}_{\forall t})$ \\
		$(a_{\alpha_e+\kappa_e}, b_{\alpha_e+\kappa_e})$ & $\alpha_e+\kappa_e$ gamma prior & $(1, 1)$ \\
		$(a_{\gamma_e}, b_{\gamma_e})$ & $\gamma_e$ gamma prior & $(1, 1)$ \\
		$(c_{\rho_e}, d_{\rho_e})$ & $\rho_e$ beta prior & $(1, 1)$ \\
		$(a_{\gamma_c}, b_{\gamma_c})$ & $\gamma_{c}$ gamma prior & $(1,1)$ \\
		$(a_{\kappa_c}, b_{\kappa_c})$ & $\kappa_c$ gamma prior & $(1000,1)$ \\
		$\sigma^2_{\gamma_c}$ & $\gamma_c$ Metropolis-Hastings proposal variance & $1$ \\
		$\sigma^2_{\kappa_c}$ & Metropolis-Hastings proposal variance & $100$ \\
		$(a_{\alpha_c}, b_{\alpha_c})$ & gamma prior & $(1,1)$ \\
		\bottomrule
	\end{tabular}
	\caption[Sparse factorial BP-AR-HMM seizure experiments parameters.]{Parameters used in epileptic seizures and bursts experiments. When applicable, the same parameters were used for the standard BP-AR-HMM as in the correlated BP-AR-HMMs. The analysis of two two seizures involved 16 iEEG channels, and the analysis of the 15 bursts and single seizure involved 6 iEEG channels.}
	\label{tbl:HIW_BPARHMM_szParams}
\end{table}

\section*{Acknowledgements}
This work is supported in part by AFOSR Grant FA9550-12-1-0453 and DARPA Grant FA9550-12-1-0406 negotiated by AFOSR, ONR Grant N00014-10-1-0746, NINDS RO1-NS041811, RO1-NS48598, and U24-NS063930-03, and the Mirowski Discovery Fund for Epilepsy Research.







\end{document}